\newcommand{\params}{\boldsymbol{\theta}}
\newcommand{\zparams}{\boldsymbol{\pi}}
\newcommand{\gparams}{\boldsymbol{\phi}}
\newcommand{\Gparams}{\boldsymbol{\Phi}}
\newcommand{\stheta}{\sigma(\params)}
\newcommand\perms{\Sigma_d}  
\newcommand{\topset}[1]{T_{#1}(\params)}  
\newcommand{\frontset}{P(\params)}  
\newcommand\ixs{\boldsymbol{\rho}}
\newcommand\DP[2]{\left\langle #1, #2 \right\rangle}
\def\eqref#1{equation~\ref{#1}}
\def\1{\bm{1}}
\def\rvr{{\mathbf{r}}}
\def\rvs{{\mathbf{s}}}
\def\rvv{{\mathbf{v}}}
\def\rvx{{\mathbf{x}}}
\def\rvz{{\mathbf{z}}}
\def\rmA{{\mathbf{A}}}
\def\vx{{\bm{x}}}
\DeclareMathAlphabet{\mathsfit}{\encodingdefault}{\sfdefault}{m}{sl}
\SetMathAlphabet{\mathsfit}{bold}{\encodingdefault}{\sfdefault}{bx}{n}
\def\gG{{\mathcal{G}}}
\DeclareMathOperator*{\argmax}{arg\,max}
\DeclareMathOperator*{\argmin}{arg\,min}
\definecolor{mydarkblue}{rgb}{0,0.08,0.55}
\renewcommand{\bm}[1]{\mathbf{#1}}
\newcommand{\method}{{DAGuerreotype\,}}
\newtheorem{theorem}{Theorem}[section]
\newtheorem{proposition}{Proposition}[theorem]
\newtheorem{lemma}[theorem]{Lemma}
\newtheorem{definition}{Definition}[section]
\title{DAG Learning on the Permutahedron}
\author{Valentina Zantedeschi \\
ServiceNow Research\\
\texttt{\small vzantedeschi@gmail.com} \\
\And
Luca Franceschi \\
Amazon Web Services\thanks{
Work done prior to joining Amazon.} \\
\texttt{\small franuluc@amazon.de}\\
\And
Jean Kaddour\\
University College London,\\
Centre for AI \\
\texttt{\small jean.kaddour.20@ucl.ac.uk}\\
\AND 
Matt J. Kusner \\
University College London,\\
Centre for AI \\
\texttt{\small m.kusner@ucl.ac.uk}
\And
Vlad Niculae \\
Informatics Institute,\\
University of Amsterdam \\
\texttt{\small v.niculae@uva.nl}
}
\definecolor{CTeal}{HTML}{006E7F}
\definecolor{CYellow}{HTML}{F8CB2E}
\definecolor{COrange}{HTML}{EE5007}
\definecolor{CRed}{HTML}{B22727}
\begin{document}

\RestyleAlgo{ruled}

\setlength{\belowcaptionskip}{-10pt}

\maketitle


\begin{abstract}
We propose a continuous optimization framework for discovering a latent directed acyclic graph (DAG) from observational data.
Our approach optimizes over the polytope of permutation vectors, the so-called \emph{Permutahedron}, to learn a topological ordering. Edges can be optimized jointly, or learned conditional on the ordering via a non-differentiable subroutine.
Compared to existing continuous optimization approaches our formulation has a number of advantages including:
1. \emph{validity}: optimizes over exact DAGs as opposed to other relaxations optimizing approximate DAGs;
2. \emph{modularity}: accommodates any edge-optimization procedure, edge structural parameterization, and optimization loss;
3. \emph{end-to-end}: either alternately iterates between node-ordering and edge-optimization, or optimizes them jointly.
We demonstrate, on real-world data problems in protein-signaling and transcriptional network discovery, that our approach lies on the Pareto frontier of two key metrics, the SID and SHD.
\end{abstract}

\section{Introduction}

In many domains, including cell biology \citep{protein_network}, finance \citep{finance}, and genetics \citep{genetics}, the data generating process is thought to be represented by an underlying directed acylic graph (DAG).
Many models rely on DAG assumptions, e.g., causal modeling uses DAGs to model distribution shifts, ensure predictor fairness among subpopulations, or learn agents more sample-efficiently \citep{survey}.   
A key question, with implications ranging from better modeling to causal discovery, 
is how to recover this unknown DAG from observed data alone.
While there are methods for identifying the underlying DAG if given additional interventional data~\citep{eberhardt2007causation,hauser2014two,shanmugam2015learning,kocaoglu2017cost,brouillardLLLD20,addanki2020efficient,pmlr-v124-squires20a,enco}, it is not always practical or ethical to obtain such data (e.g., if one aims to discover links between dietary choices and deadly diseases). 

Learning DAGs from observational data alone is fundamentally difficult for two reasons. 
(i) \emph{Estimation}: it is possible for different graphs to produce similar observed data, either because the graphs are Markov equivalent (they represent the same set of data distributions) or because not enough samples have been observed to distinguish possible graphs. This riddles the search space with local minima; 
(ii) \emph{Computation}: DAG discovery is a costly combinatorial optimization problem over an exponentially large solution space and subject to global acyclicity constraints.

To address issue (ii), recent work has proposed continuous relaxations of the DAG learning problem. 
These allow one to use well-studied continuous optimization procedures to search the space of DAGs given a score function (e.g., the likelihood). 
While these methods are more efficient than combinatorial methods, the current approaches have one or more of the following downsides: 
1. \emph{Invalidity}: existing methods based on penalizing the exponential of the adjacency matrix~\citep{notears,yu2019,zheng2020learning,ngG020,lachapelleBDL20,he0SXLJ21} are not guaranteed to return a valid DAG in practice (see~\citet{ng2022} for a theoretical analysis), but require post-processing to correct the graph to a DAG. How the learning method and the post-processing method interact with each other is not currently well-understood;
2. \emph{Non-modularity}: continuously relaxing the DAG learning problem is often done to leverage gradient-based optimization~\citep{notears,ngG020,cundy2021bcd,dpdag}. This requires all training operations to be differentiable, preventing the use of certain well-studied black-box estimators for learning edge functions;
3. \emph{Error propagation}: methods that break the DAG learning problem into two stages risk propagating errors from one stage to the next \citep{teyssier2005,buhlmann2014cam,npvar,reisach2021beware,rolland2022score}.

Following the framework of~\citet{friedman2003being}, we propose a new differentiable DAG learning procedure based on a decomposition of the problem into: (i) learning a topological ordering (i.e., a total ordering of the variables) and (ii) selecting the best scoring DAG consistent with this ordering.
Whereas previous differentiable order-based works~\citep{cundy2021bcd,dpdag} implemented step (i) through the usage of permutation matrices\footnote{Critically, the usage of permutation matrices allows to maintain a fully differentiable path from loss to parameters (of the permutation matrices) via Sinkhorn iterations or other (inexact) relaxation methods.}, we take a more straightforward approach by directly working in the space of vector orderings.
Overall, we make the following contributions to score-based methods for DAG learning:
\begin{itemize}[leftmargin=*]
\item We propose a novel vector parametrization that associates a single scalar value to each node. This parametrization is ($i$) intuitive: the higher the score the lower the node is in the order;
($ii$) stable, as small perturbations in the parameter space result in small perturbations in the DAG space.   
\item With such parameterization in place, we show how to learn DAG structures end-to-end from observational data, with any choice of edge estimator (we do not require differentiability).
To do so, we leverage recent advances in discrete optimization~\citep{niculae2018sparsemap,correia2020efficient} and derive a novel top-k oracle over permutations, which could be of independent interest.
\item We show that DAGs learned with our proposed framework lie on the Pareto front of two key metrics (the SHD and SID) on two real-world tasks and perform favorably on several synthetic tasks.
\end{itemize}

These contributions allow us to develop a framework that addresses the issues of prior work. Specifically, our approach: 
1. Models sparse distributions of DAG topological orderings, ensuring all considered graphs are DAGs (also during training); 
2. Separates the learning of topological orderings from the learning of edge functions, but
3. Optimizes them end-to-end, either jointly or alternately iterating between learning ordering and edges. 

\section{Related Work}

The work on DAG learning can be largely categorized into four families of approaches: (a) combinatorial methods, (b) continuous relaxation, (c) two-stage, (d) differentiable, order-based.

\noindent\textbf{Combinatorial methods.}
These methods are either constraint-based, relying on conditional independence tests for selecting the sets of parents~\citep{spirtes2000causation}, or score-based, evaluating how well possible candidates fit the data~\citep{geiger1994learning} (see~\citet{kitson2021} for a survey). Constraint-based methods, while elegant, require conditional independence testing, which is known to be a hard statistical problem \cite{shah2020hardness}. For this reason, we focus our attention in this paper on score-based methods. Of these, exact combinatorial algorithms exist only for small number of nodes $d$~\citep{singh2005finding,NIPS2013_8ce6790c,Cussens11}, because the space of DAGs grows superexponentially in $d$ and finding the optimal solution is NP-hard to solve \citep{np_hard}. 
Approximate methods \citep{NIPS2015_2b38c2df,aragam2015concave,ramseyGSG17} rely on global or local search heuristics in order to scale to problems with thousands of nodes. 

\noindent\textbf{Continuous relaxation.}
To address the complexity of the combinatorial search, more recent methods have proposed exact characterizations of DAGs that allow one to tackle the problem by continuous optimization \citep{notears,yu2019,zheng2020learning,ngG020,lachapelleBDL20,he0SXLJ21}. To do so, the constraint on acyclicity is expressed as a smooth function~\citep{notears,yu2019} and then used as penalization term to allow efficient optimization.
However, this procedure no longer guarantees the absence of cycles at any stage of training, and solutions often require post-processing. 
Concurrently to this work, \citet{bello2022} introduce a log-determinant characterization and an optimization procedure that is guaranteed to return a DAG at convergence. 
In practice, this relies on thresholding for reducing false positives in edge prediction.

\noindent\textbf{Two-stage.}
The third prominent line of works learns DAGs in two-stages: (i) finding an ordering of the variables, and (ii) selecting the best scoring graph among (or marginalizing over) the structures that are consistent with the found ordering \citep{teyssier2005,buhlmann2014cam,npvar,reisach2021beware,rolland2022score}.
As such, these approaches work over exact DAGs, instead of relaxations. 
Additionally, they work on the space of orderings which is smaller and more regular than the space of DAGs~\citep{friedman2003being,teyssier2005}, while guaranteeing acyclicity. The downside of these approaches is that errors in the first stage can propagate to the second stage that is unaware of them.

\noindent\textbf{Differentiable, order-based.}\label{sec:difforder}
The final line of works uses the two-stage decomposition above, but addresses the issue of error propagation by formulating an end-to-end differentiable optimization approach \citep{friedman2003being,cundy2021bcd,dpdag}.
In particular, \citet{cundy2021bcd} optimizes node ordering using the polytope of permutation matrices (the Birkhoff polytope) via the Gumbel-Sinkhorn approximation \citep{mena2018learning}. This method requires
$\mathcal{O}(d^3)$ time and $\mathcal{O}(d^2)$ memory complexities. To lower the time complexity, \citet{dpdag} suggest to leverage another operator \citep[SoftSort,][]{prilloE20} that drops a constraint on the permutation matrix (allowing row-stochastic matrices). 
Both methods introduce a mismatch between forward (based on the hard permutation) and backward (based on soft permutations) passes.
Further, they require all downstream operations to be differentiable, including the edge estimator.

\section{Setup}

\subsection{The Problem}
Let $\mathbf{X} \in \mathbb{R}^{n \times d}$ be a matrix of $n$ observed data inputs generated by an unknown Structural Equation Model (SEM) \citep{pearl2000causality}. An SEM describes the functional relationships between $d$ features as edges between nodes in a DAG $\gG \in \mathbb{D}[d]$ (where $\mathbb{D}[d]$ is the space of all DAGs with $d$ nodes). Each feature $\rvx_j \in \mathbb{R}^n$ is generated by some (unknown) function $f_j$ of its (unknown) parents $\textrm{pa}(j)$ as: $\rvx_j \!=\! f_j(\rvx_{\textrm{pa}(j)})$. 
We keep track of whether an edge exists in the graph $\gG$ using an adjacency matrix $\rmA \in \{0,1\}^{d \times d}$ (i.e., $\rmA_{ij} \!=\! 1$ if and only if there is a (directed) edge $i \rightarrow j$). For example, a special case is when the structural equations are linear with Gaussian noise,
\begin{align}
    \mathbf{y}_j = f_j\left(\mathbf{X}, \mathbf{A}_{j}\right) = \mathbf{X} (\mathbf{w}_j \circ \mathbf{A}_{j}) + \varepsilon; \quad \varepsilon \sim \mathcal{N}(0, \nu) \label{eq:linear_model}
\end{align}
where $\mathbf{w}_j\in\mathbb{R}^ d$, $\mathbf{A}_j$ is the $j$th column of $\mathbf{A}$, and $\nu$ is the noise variance. This is just to add intuition; our framework is compatible with non-linear, non-Gaussian structural equations.

\subsection{Objective}
Given $\mathbf{X}$, our goal is to recover the unknown DAG that generated the observations. To do so we must learn (a) the \emph{connectivity parameters} of the graph, represented by the adjacency matrix $\mathbf{A}$, and (b) the \emph{functional parameters} $\Gparams=\{\gparams_j\}_{j=1}^d$ that define the edge functions $\{f^{\gparams_j}\}_{j=1}^d$. Score-based methods \citep{kitson2021} learn these parameters via a constrained non-linear mixed-integer optimization problem 
\begin{align}
\begin{aligned}
\min_{\substack{\mathbf{A}  \in \mathbb{D}[d] \\ \Gparams}} \; &\sum_{j=1}^d 
\ell\left(\vx_j,
f^{\gparams_j}\left(\mathbf{X} \circ \mathbf{A}_{j}\right)\right) + \lambda \Omega(\boldsymbol{\Phi}),
\label{eq:problem} 
\end{aligned}
\end{align}
%
where $\ell: \mathbb{R}^n\times \mathbb{R}^n \to \mathbb{R}$ is a loss that describes how well each feature $\vx_j$ is predicted by $f^{\gparams_j}$. As in eq.~(\ref{eq:linear_model}), the adjacency matrix defines the parents of each feature $\vx_j$ as follows $pa(j) = \{i \subseteq [d] \setminus j \;|\; \mathbf{A}_{ij} = 1 \}$. Only these features will be used by $f^{\gparams_j}$ to predict $\vx_j$, via $\mathbf{X} \circ \mathbf{A}_{j}$. The constraint $\mathbf{A} \in \mathbb{D}[d]$ enforces that the connectivity parameters $\mathbf{A}$ describes a valid DAG. Finally, $\Omega(\boldsymbol{\Phi})$ is a regularization term  encouraging sparseness.
%
%
%
So long as this regularizer takes the same value for all DAGs within a Markov equivalence class, the consistency results of~\citet[][Theorem 1]{brouillardLLLD20}\citet{} prove that the solution to Problem~(\ref{eq:problem}) is Markov equivalent to the true DAG, given standard assumptions.



\subsection{Sparse Relaxation Methods} \label{sec:srm}
\renewcommand\rvz{\boldsymbol{\alpha}}
\newcommand\onez{\alpha}
In developing our method, we will leverage recent works in sparse relaxation. In particular, we will make use of the SparseMAP \citep{niculae2018sparsemap} and Top-$k$ SparseMAX \citep{correia2020efficient} operators, which we briefly describe below. At a high level, the goal of both these approaches is to relax structured problems of the form $\rvz^\star := \argmax_{\rvz \in \triangle^D}  \rvs^\top \rvz$ so that $\partial \rvz^\star / \partial \rvs$ is well-defined (where $\triangle^D := \{\rvz \in \mathbb{R}^D \mid \rvz \succeq 0, \sum_{i=1}^D \onez_i = 1\}$ is the $D$-dimensional simplex). This will allow $\rvs$ to be learned by gradient-based methods.
Note that both approaches require querying an oracle that finds the best scoring structures.
We are unaware of such an oracle for DAG learning, i.e. for $\mathbb{D}[d]$ being the vertices of $\triangle^D$. 
However, we will show that by decomposing the DAG learning problem, we can find an oracle for the decomposed subproblem. 
We will derive this oracle in Section~\ref{sec:method} and prove its correctness.

\textbf{Top-$k$ sparsemax~\citep{correia2020efficient}.} 
This approach works by 
(i) regularizing $\rvz$, and (ii) constraining the number of non-zero entries of $\rvz$ to be at most as follows $k$: $\argmax_{\rvz \in \triangle^D, \| \rvz \|_0 \leq k}  \rvs^\top \rvz - \|\rvz\|_2^2$.
To solve this optimization problem, top-$k$ sparsemax requires an oracle that 
returns the $k$ structures with the highest scores $\rvs^\top \rvz$. 

\textbf{SparseMAP~\citep{niculae2018sparsemap}.}
Assume $\rvs$ has a 
low-dimensional parametrization $\rvs = \mathbf{B}^\top \rvr$, where $\mathbf{B} \in \mathbb{R}^{q \times D}$ and $q \ll D$. SparseMAP relaxes $\boldsymbol{\alpha}^\star = \argmax_{\boldsymbol{\alpha} \in \triangle^D}  \rvr^\top \mathbf{B} \boldsymbol{\alpha}$ by regularizing the lower-dimensional
`marginal space' $\argmax_{\rvz \in \triangle^D}  \rvr^\top \mathbf{B} \rvz - \|\mathbf{B}\rvz\|_2^2$.
The relaxed problem can be solved using the active set algorithm~\citep{nocedal1999numerical}, which iteratively queries an oracle for finding the best scoring structure $(\rvr - \mathbf{B}^\top \rvz^{(t)})^\top \mathbf{B} \rvz$ at iteration $t+1$.

\section{DAG Learning via Sparse Relaxations}
\label{sec:method}

%
%

\begin{figure*}[t!]
    \centering
    \includegraphics[width=\textwidth]{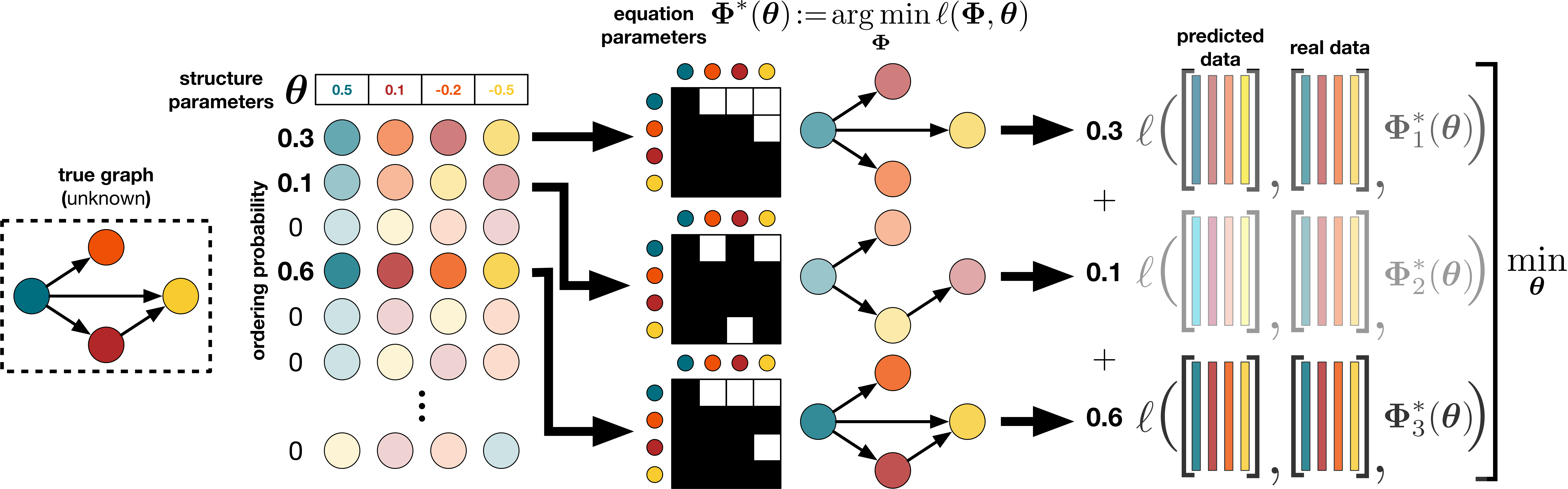}
    \caption{\textbf{\method}: Our end-to-end approach to DAG learning works by (a) learning a sparse distribution over node orderings via structure parameters $\theta$, and (b) learning a sparse predictor $\mathbf{w}^*$ to estimate the data. Any black-box predictor can be used to learn $\mathbf{w}^*$; differentiability is not necessary.} 
    \label{fig:overview}
\end{figure*}



%
%
%
A
key difficulty when learning DAG structures 
is that the characterization of the set of all valid DAGs: 
as soon as some edges are added, other edges are prohibited. However, 
note the following key observation: any DAG can be decomposed as follows
(i) Assign to each of the $d$ nodes a rank and reorder nodes according to this rank (this is called a \textit{topological ordering}); (ii) Only allow edges from lower nodes in the order to higher nodes, i.e., from a node $i$ to a node $j$ if $\mathbf{x}_i \prec \mathbf{x}_j$.
This approach lies at the core of our method for DAG learning, which we dub \emph{\method}\!, shown in Figure~\ref{fig:overview}.
In this section we derive the framework, present a global sensitivity result,
show how to learn both the structural and the edge equations parameters leveraging the 
sparse relaxation methods introduced above and 
study the computational complexity of our method.

\subsection{Learning on the Permutahedron}

Given $d$ nodes, let $\Sigma_d$ be the set of all permutations of node indices $\{1, \dots, d\}$. Given a vector $\rvv \in \mathbb{R}^d$, let $\rvv^\sigma := [\rvv^{\sigma(1)}, \ldots, \rvv^{\sigma(d)}]^\top$ be the vector of reordered $\rvv$ according to the permutation $\sigma \in \Sigma_d$. Similarly, for a matrix $\mathbf{M} \in \mathbb{R}, $ let $\mathbf{M}^\sigma$ be the matrix obtained by permuting the rows and columns of $\mathbf{M}$ by $\sigma$. 



Let $\mathbb{D}_\text{C}[d]$ be the set of \emph{complete} DAGs (i.e., DAGs with all possible edges). Let $\mathbf{R} \in \{0,1\}^{d \times d}$ be the binary strictly upper triangular matrix where the upper triangle is all equal to $1$. Then $\mathbb{D}_\text{C}[d]$ can be fully enumerated given $\mathbf{R}$ and $\Sigma_d$, as follows:
\begin{align}
\mathbb{D}_\text{C}[d] = \{
\mathbf{R}^\sigma : \sigma \in \Sigma_d,
\mathbf{R} \in \{0, 1\}^{d \times d},
\mathbf{R}_{ij} = 0~\forall i \geq j, 
\mathbf{R}_{ji} = 1~\forall j < i 
\}.
\end{align}
Therefore it is sufficient to learn $\sigma$ in step (i), and then learn which edges to drop in step (ii).

\noindent\textbf{The vector parameterization.}
Imagine now that $\params\in\mathbb{R}^d$ defines a score for each node, and these scores induce an ordering $\sigma(\params)$: the smaller the score, the earlier the node should be in the ordering. Formally, the following optimization problem finds such an ordering:
\begin{align}\label{eq:rank}
\stheta \in \argmax_{\sigma \in \perms} \params^\top\ixs^\sigma\,,
\quad \text{where}~\ixs = [1, 2, \dots, d]\,.
\end{align}
Note that a simple oracle solves this optimization problem: sort $\params$ into increasing order (as given by The Rearrangement Inequality
\citep[Thms.~368--369]{hardy1952inequalities}).
We emphasize that we write `$\in$' in eq.~(\ref{eq:rank}) since the r.h.s can be a set: in fact, this happens exactly when some  components of $\params$ are equal. 
Beside being intuitive and efficient, the parameterization 
of $\mathbb{D}_C[d]$ given by $\params\mapsto \boldsymbol{R}^{\sigma(\boldsymbol{\theta})}$ allows us to upper bound the structural Hamming distance ($\mathrm{SHD}$) between any two complete DAGs ($\boldsymbol{R}^{\sigma(\boldsymbol{\theta})}$ and $\boldsymbol{R}^{\sigma(\boldsymbol{\theta'})}$)
by the number of hyper-planes of "equal coordinates" ($H_{i, j}=\{\boldsymbol{x}\in\mathbb{R}^d \, : \, \boldsymbol{x}_i = \boldsymbol{x}_j\}$) that are traversed by the segment connecting $\params$ and $\params'$ (see Figure \ref{fig:hyperplanes} in the Appendix for a schematic). 
More formally, we state the following theorem.
\begin{theorem}[Global sensitivity]\label{th:connecting-segment}
For any $ \params \in \mathbb{R}^d$ and $\params' \in \mathbb{R}^d$
\begin{equation}
    \label{eq:sens}
    \mathrm{SHD}\left(\boldsymbol{R}^{\sigma(\boldsymbol{\theta})}, \boldsymbol{R}^{\sigma(\boldsymbol{\theta'})}\right) \leq \int_{t\in[0,1]} \sum_i\sum_{j> i}\delta_{H_{i,j}}(\params + t(\params' - \params)) \, \mathrm{d}t
\end{equation}
where $\delta_A(x)$ is the (generalized) Dirac delta that evaluates to infinity if $x\in A$ and $0$ otherwise. 
\end{theorem}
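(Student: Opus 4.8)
The plan is to reduce this continuous sensitivity statement to a combinatorial fact about orderings, after first pinning down the meaning of the right-hand side. Write $\gamma(t):=(1-t)\params+t\params'$. For each pair $i<j$ the scalar map $t\mapsto\gamma(t)_i-\gamma(t)_j$ is affine, hence it either vanishes at a single $t$ (a transversal crossing of the hyperplane $H_{i,j}$) or is identically zero. So, away from degenerate configurations, the integrand $\sum_{i<j}\delta_{H_{i,j}}(\gamma(t))$ is supported on finitely many crossing times and the right-hand side counts the hyperplanes $H_{i,j}$ that the segment crosses; and whenever an endpoint lies on some $H_{i,j}$ (in particular when $\sigma(\params)$ or $\sigma(\params')$ is not a singleton, i.e.\ $\params$ or $\params'$ has a repeated coordinate), or the segment lies inside some $H_{i,j}$, the integrand is $+\infty$ on a set that forces the right-hand side to be $+\infty$ and the bound trivial. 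Thus it suffices to treat $\params,\params'$ with distinct coordinates and show $\mathrm{SHD}\big(\boldsymbol{R}^{\sigma(\params)},\boldsymbol{R}^{\sigma(\params')}\big)\le \#\{\{i,j\}: \gamma \text{ crosses } H_{i,j}\}$ (with the standard SHD convention counting a reversal once).

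The combinatorial core is the identification of $\mathrm{SHD}$ between \emph{complete} DAGs with Kendall's $\tau$. Since $\boldsymbol{R}$ is strictly upper triangular with all ones, $\boldsymbol{R}^\sigma$ contains the edge $a\to b$ exactly when $a$ precedes $b$ in the ordering induced by $\sigma$; hence $\boldsymbol{R}^{\sigma(\params)}$ and $\boldsymbol{R}^{\sigma(\params')}$ differ precisely on the unordered pairs $\{i,j\}$ that the two orderings rank oppositely, each such pair contributing a single reversed edge. Therefore $\mathrm{SHD}\big(\boldsymbol{R}^{\sigma(\params)},\boldsymbol{R}^{\sigma(\params')}\big)$ equals the number of pairs with $\sign(\params_i-\params_j)\neq\sign(\params'_i-\params'_j)$.

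Finally I would match each discordant pair to a distinct crossing: if $\{i,j\}$ is discordant then $\params_i-\params_j$ and $\params'_i-\params'_j$ are nonzero of opposite sign, so the affine map $t\mapsto\gamma(t)_i-\gamma(t)_j$ changes sign and vanishes at a unique $t^\star\in(0,1)$, i.e.\ the segment crosses $H_{i,j}$; distinct pairs index distinct hyperplanes, so the number of discordant pairs is at most the number of crossed hyperplanes. Chaining the three bounds gives the theorem. One can equivalently organize the argument as a telescoping estimate along $\gamma$: the complete-DAG-valued map $t\mapsto\boldsymbol{R}^{\sigma(\gamma(t))}$ is piecewise constant, changing only at crossing times and, at a generic crossing, by a single reversed edge, so $\mathrm{SHD}$ of the endpoints is bounded by the total jump.

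I expect the only delicate point to be the rigorous treatment of the right-hand side: giving precise meaning to the ``generalized Dirac delta'' integral and checking that in every non-generic situation---an endpoint on a hyperplane, several $H_{i,j}$ met at a common $t^\star$, or the segment contained in a hyperplane, all of which occur only on a measure-zero set of $(\params,\params')$---the right-hand side is either $+\infty$ or still dominates the corresponding (possibly multi-edge) jump. By contrast, the identity $\mathrm{SHD}=\text{Kendall-}\tau$ for complete DAGs and the implication ``discordant pair $\Rightarrow$ sign change $\Rightarrow$ hyperplane crossing'' are routine.
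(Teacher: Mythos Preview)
Your proposal is correct and follows essentially the same route as the paper: reduce to the generic case where both endpoints have distinct coordinates (noting the right-hand side is $+\infty$ otherwise), and then identify the SHD between the two complete DAGs with the number of hyperplane crossings along the segment. The only cosmetic difference is the combinatorial lens: the paper phrases the core identity as ``SHD $=$ minimal number of adjacent transpositions taking $\sigma(\params)$ to $\sigma(\params')$ $=$ number of hyperplane crossings'' (the bubble-sort view), whereas you phrase it as ``SHD $=$ Kendall-$\tau$ $=$ number of discordant pairs, and each discordant pair forces a unique crossing of its $H_{i,j}$''; these are the same fact, and your version is arguably more self-contained since it avoids invoking the adjacent-flip characterization of Kendall distance.
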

In particular, Theorem~\ref{eq:sens} shows that we can expect that small changes in $\params$ (e.g. due to gradient-based iterative optimization) lead to small changes in the complete DAG space, offering a result that is reminiscent to Lipschitz-smoothness for smooth optimization. 
We defer proof and further commentary (also compared to the parameterization based on permutation matrices) to Appendix \ref{sec:sensitivity}.

\noindent\textbf{Learning $\params$ with gradients.}
Notice that we cannot take gradients through eq.~(\ref{eq:rank}) because (a) $\stheta$ is not even a function (due to it possibly being a set), and (b) even if we restrict the parameter space not to have ties, the mapping is piece-wise constant and uninformative for gradient-based learning. 
To circumvent these issues,  \citet{blondel2020fast} propose to relax  problem (\ref{eq:rank})
by optimizing over the 
convex hull of all permutations of $\ixs$, that is the 
the order-$d$ \emph{Permutahedron} $\mathbb{P}[d] := \mbox{conv}\{ \ixs^\sigma \mid \sigma \in \Sigma_d \}$, and adding a convex regularizer. These alterations yield to a class of  differentiable mappings (soft permutations), indexed by $\tau\in\mathbb{R}^+$, 
\begin{align}
    \boldsymbol{\mu}(\params) = \argmax_{\boldsymbol{\mu} \in \mathbb{P}[d]} \params^\top \boldsymbol{\mu} - \frac{\tau}{2} \|\boldsymbol{\mu}\|_2^2, \label{eq:permutahedron}
\end{align}
which, in absence of ties,  are exact for $\tau\to 0$. This technique is, however, unsuitable for our case, as the $\boldsymbol{\mu}(\params)$'s do not describe valid permutations except when taking values on vertices of $\mathbb{P}[d]$.
Instead, we show next how to obtain meaningful gradients whilst maintaining validity adapting the sparseMAP or the top-$k$ sparsemax operators to our setting.

\noindent\textbf{Leveraging sparse relaxation methods.}
%
Let $D=d!$ be the total number permutations of $d$ elements, and $\triangle^D$ be the $D$-dimensional simplex. We can (non-uniquely)  decompose $\boldsymbol{\mu}=\sum_{\sigma \in \Sigma_d} \boldsymbol{\alpha}_{\sigma} \boldsymbol{\rho}^{\sigma}$ for some $\boldsymbol{\alpha}\in \triangle^D$. Plugging this into eq. (\ref{eq:permutahedron}) leads to 
\begin{equation}\label{eq:dag_sparsemap}
    \boldsymbol{\alpha}^\text{sparseMAP}(\params) \in \argmax_{\boldsymbol{\alpha} \in \triangle^{D} } 
     \params^\top \mathbb{E}_{\sigma \sim \boldsymbol{\alpha}}[\ixs_\sigma]  
    - \frac{\tau}{2} \left\|\mathbb{E}_{\sigma \sim \boldsymbol{\alpha}}[\ixs_\sigma]\right\|^2_2,
\end{equation}
We can recognize in (\ref{eq:dag_sparsemap}) an instance of the SparseMAP operator introduced in Section \ref{sec:srm}. 
Among all possible decomposition, we will favor sparse ones. 
This is achieved by employing the
active set algorithm~\citep{nocedal1999numerical}
which
only requires access to an 
oracle solving
eq.~(\ref{eq:rank}), i.e., sorting $\params$.

Alternatively, because the only term in the regularization that matters for optimization is $\boldsymbol{\alpha}$, we can regularize it alone, and
directly 
restrict the number of non-zero entries of $\boldsymbol{\alpha}$ to some  $k>2$ as follows 
\begin{equation}\label{eq:dag_sparsemax}
    \boldsymbol{\alpha}^\text{top-$k$ sparsemax}(\params) \in \argmax_{\boldsymbol{\alpha} \in \triangle^{|\Sigma_d|}, \| \boldsymbol{\alpha} \|_0 \leq k} 
     \params^\top \mathbb{E}_{\sigma \sim \boldsymbol{\alpha}}[\ixs^\sigma]  
    - \frac{\tau}{2}\left\| \boldsymbol{\alpha} \right\|^2_2,
\end{equation}
where we assume ties are  resolved arbitrarily.
\begin{wrapfigure}[11]{R}{0.4\textwidth}
    \vspace{-12mm}
    \begin{minipage}{0.4\textwidth}
        \input{top_k_algo}
    \end{minipage}
\end{wrapfigure}

This is a formulation of top-$k$ sparsemax introduced in Section \ref{sec:srm} that we can efficiently employ to learn DAGs provided that we have access to a fast algorithm 
that returns a set of $k$ permutations with highest value of $g_{\params}(\sigma) = \params^\top \ixs^\sigma$.
Algorithm \ref{alg:topk2} describes such an oracle, which,
to our knowledge,
has never been derived before and may be of independent interest. 
The algorithm restricts the search for the best solutions to the set of permutations that are one adjacent transposition away from the best solutions found so far.
We refer the reader to  Appendix~\ref{sec:topk} for notation and proof of correctness of Algorithm \ref{alg:topk2}.
\textbf{Remark:} Top-$k$ sparsemax optimizes over the highest-scoring permutations, while sparseMAP draws any set of permutations the marginal solution can be decomposed into.
As an example, for $\params = \boldsymbol{0}$, sparseMAP returns two permutations, $\sigma$ and its inverse $\sigma^{-1}$.
On the other hand, because at $\boldsymbol{0}$ all the permutations have the same probability, top-$k$ sparsemax returns an arbitrary subset of $k$ permutations which, when using the oracle presented above, will lie on the same face of the permutahedron.
In Appendix~\ref{sec:add-expe} we provide an empirical comparison of these two operators when applied to the DAG learning problem.

\subsection{DAG Learning}

In order to select which edges to drop from $\mathbf{R}$, we regularize the set of edge functions $\{f^{\gparams_j}\}_{j=1}^d$ to be sparse via $\Omega(\boldsymbol{\Phi})$ in eq.~(\ref{eq:problem}), i.e., $ \|\boldsymbol{\Phi}\|_0$ or $ \|\boldsymbol{\Phi}\|_1$. Incorporating the sparse decompositions of eq.~(\ref{eq:dag_sparsemap}) or eq.~(\ref{eq:dag_sparsemax}) into the original problem in eq.~(\ref{eq:problem}) yields

\begin{align}
\min_{\params, \Gparams} \; & \mathbb{E}_{\sigma \sim \boldsymbol{\alpha}^\star(\params)} \left[ \sum_{j=1}^d 
\ell\left(\vx_j,
f^{\gparams_j}\left(\mathbf{X} \circ (\mathbf{R}^\sigma)_{j}\right)\right) + \lambda \Omega(\boldsymbol{\Phi}) \right],
\label{eq:sparsemapdist}
\end{align}

where $(\mathbf{R}^\sigma)_{j}$ is the $j$th column of $\mathbf{R}^\sigma$ and $\alpha^\star$ is the top-$k$ sparsemax or the SparseMAP distribution. 
Notice that for both sparse operators, in the limit $\tau \to 0_+$
the distribution $\alpha^\star$ puts all probability mass
on one permutation: $\sigma(\params)$ (the sorting of $\params$),
and thus eq.~(\ref{eq:sparsemapdist}) is a generalization
of eq.~(\ref{eq:problem}).

%



We can solve the above optimization problem for the optimal $\params, \Gparams$ jointly via gradient-based optimization. The downside of this, however, is that training may move towards poor permutations purely because $\Gparams$ is far from optimal.
Specifically, at each iteration, the distribution over permutations $\alpha^\star(\params)$ is updated based on functional parameters $\Gparams$ that are, on average, good for all selected permutations.
In early iterations, this approach can be highly suboptimal as it cannot escape from high-error local minima. 
To address this, we may push the optimization over $\Gparams$ inside the objective:
\begin{align}\label{eq:bilevel}
&\min_{\params}\;
\mathbb{E}_{\sigma \sim \alpha^\star(\params)}
\left[
\sum_{j=1}^d 
\ell\left(\vx_j,
f^{\gparams^\star(\sigma)_j}\left(\mathbf{X} \circ \left(\mathbf{R}^\sigma\right)_{j}\right)\right)
\right] \\
\text{s.t. }~&
\Gparams^\star(\sigma) = \argmin_{\Gparams} 
\sum_{j=1}^d 
\ell\left(\vx_j,
f^{\gparams_j}\left(\mathbf{X} \circ \left(\mathbf{R}^\sigma\right)_{j}\right)\right)
+ \lambda \Omega(\Gparams). \nonumber
\end{align}
This is a bi-level optimization problem
\citep{franceschi2018bilevel,dempe2020bilevel} where the inner problem fits one set of structural equations $\{f^{\gparams_j}\}_{j=1}^d$ per $\sigma \sim \alpha^\star(\params)$.
Note that, as opposed to many other settings (e.g. in meta-learning),
the outer objective depends on $\params$ only through the distribution $\alpha_*(\params)$, and not through the inner problem over $\Gparams$. 
In practice, this means that the outer optimization does not require gradients (or differentiability)
of the inner solutions at all,
saving computation and allowing for
greater flexibility in picking a solver for fitting $\Gparams^\star(\sigma)$.\footnote{%
This computational advantage is shared with
the score-function estimator \citep[SFE,][]{rubinstein1986score,Williams1992,paisley2012variational,mohamed2020monte}.
 We are not aware of any applications of SFE to permutation learning, likely due to the
 \#P-completeness of marginal inference
 over the Birkhoff polytope \citep{valiant,taskar}.
}   
For example, for $\Omega(\boldsymbol{\Phi}) = \|\boldsymbol{\Phi}\|_1$  we may invoke any
Lasso solver, and for $\Omega(\boldsymbol{\Phi}) = \|\boldsymbol{\Phi}\|_0$ we may use the algorithm of \citet{louizos2017learning}, detailed in Appendix~\ref{sec:l0}. 
The downside of this bi-level optimization is that it is only tractable when the support of $\alpha^\star(\params)$ has a few permutations. 
Optimizing for $\params$ and $\Gparams$ jointly is more efficient. 

\subsection{Computational analysis}\label{sec:complexity}
The overall complexity of our framework depends on the choice of sparse operator for learning the topological order and on the choice of the estimator for learning the edge functions.
We analyze the complexity of learning topological orderings specific to our framework, and refer the reader to previous works for the analyses of particular estimators (e.g., \citet{efron2004least}).
Note that, independently from the choice of sparse operator, the space complexity of \method is at least of the order of the edge masking matrix $\mathbf{R}$, hence $O(d^2)$.
This is in line with most methods based on continuous optimization and can be improved by imposing additional constraints on the in-degree and out-degree of a node.

\noindent\textbf{SparseMAP.} Each SparseMAP iteration involves a length-$d$ argsort and a Cholesky update of a $s$-by-$s$ matrix, where $s$ is the
size of the active set (number of selected permutations), and by 
Carath\'{e}odory's convex hull theorem~\citep{reay1965generalizations}
can be bounded by $d+1$.
Given a fixed number of iterations $K$ (as in our implementation), this leads to a time complexity of $\mathcal{O}(Kd^2)$ and space complexity $\mathcal{O}(s^2 + s d)$ for SparseMAP.
Furthermore, we warm-start the sorting algorithm with the last selected permutation. 
Both in theory and in practice, this is better than the $\mathcal{O}(d^3)$ complexity of maximization over the Birkhoff polytope.

\noindent\textbf{Top-$k$ sparsemax.} Complexity for top-$k$ sparsemax is dominated by the complexity of the top-k oracle.
In our implementation, the top-k oracle has an overall time complexity $\mathcal{O}(K^2 d^2)$ and space complexity $\mathcal{O}(K d^2)$ (as detailed in Appendix~\ref{sec:topk}) when searching for the best $K$ permutations.
When $K$ is fixed, as in our implementation, this leads to an overall complexity of the top-k sparsemax operator $\mathcal{O}(d^2)$.
In practice, $K$ has to be of an order smaller than $\sqrt{d}$ for our framework to be more efficient than existing end-to-end approaches.

\subsection{Relationship to previous differentiable order-based methods}
The advantages of our method over \citet{cundy2021bcd,dpdag} are: (a) our parametrization, based on sorting, improves efficiency in practice (Appendix~\ref{app:time}) and has theoretically stabler learning dynamics as measured by our bound on SHD (Theorem~(\ref{th:connecting-segment}));
(b) our method allows for any downstream edge estimator, including non-differentiable ones, critically allowing for off-the-shelf estimators; (c) empirically our method vastly improves over both approaches in terms of SID and especially SHD on both real-world and synthetic data (Section~\ref{sec:exp}).



\section{Experiments}\label{sec:exp}
\subsection{Experimental setup}
\noindent\textbf{Datasets}
\citet{reisach2021beware} recently demonstrated that commonly studied synthetic benchmarks have a key flaw. 
Specifically, for linear additive synthetic DAGs, the marginal variance of each node increases the `deeper' the node is in the DAG (i.e., all child nodes generally have marginal variance larger than their parents). 
They empirically show that a simple baseline that sorts nodes by increasing marginal variance and then applies sparse linear regression matches or outperforms state-of-the-art DAG learning methods. 
Given the triviality of simulated DAGs, here we evaluate all methods on two real-world tasks: \emph{Sachs}~\citep{protein_network}, a dataset of cytometric measurements of phosphorylated protein and phospholipid components in human immune system cells. 
    The problem consists of $d=11$ nodes, $853$ observations and of the graph reconstructed by~\citet{protein_network} as ground-truth DAG, which contains $17$ edges; \emph{SynTReN}~\citep{bulckeLNRMVMM06}, a set of $10$ pseudo-real transcriptional networks generated by the SynTRen simulator, each consisting of $500$ simulated gene expression observations, and a DAG of $d=20$ nodes and of $e$ edges with $e \in \{20, \dots, 25\}$. We use the networks made publicly available by~\citet{lachapelleBDL20}.
In Appendix~\ref{sec:add-expe} we, however, compare different configurations of our method also on synthetic datasets, as they constitute an ideal test-bed for assessing the quality of the ordering learning step independently from the choice of structural equation estimator.

\begin{figure}[t]
    \centering
    \includegraphics[width=0.9\textwidth]{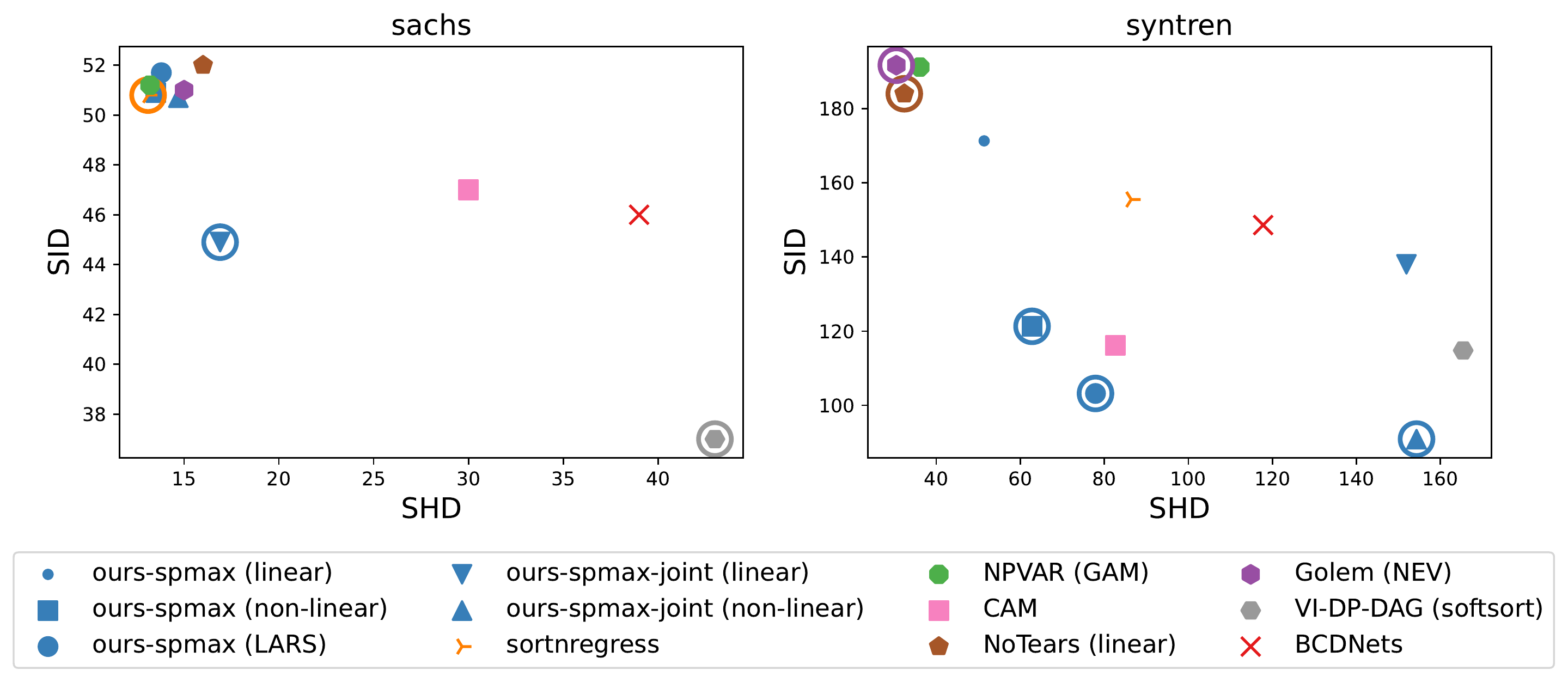}
    \caption{SHD vs SID on real datasets \emph{Sachs} and \emph{SynTReN}. Results are averaged over $10$ seeds and the solutions lying on the Pareto front are circled.}
    \label{fig:pareto}
\end{figure}

\noindent\textbf{Baselines}
We benchmark our framework against state-of-the-art 
methods:
 \emph{NoTears} (both its linear~\citep{notears} and nonlinear~\citep{zheng2020learning} models), the first continuous optimization method, which optimizes the Frobenius reconstruction loss and where the DAG constraint is enforced via the Augmented Lagrangian approach;
\emph{Golem}~\citep{ngG020}, another continuous optimization method that optimizes the likelihood under Gaussian non-equal variance error assumptions regularized by \emph{NoTears}'s DAG penalty;
\emph{CAM}~\citep{buhlmann2014cam}, a two-stage approach that estimates the variable order by maximum likelihood estimation based on an additive structural equation model with Gaussian noise;
\emph{NPVAR}~\citep{npvar}, an iterative algorithm that learns topological generations and then prunes edges based on node residual variances 
(with the Generalized Additive Models~\citep{hastie2017generalized} regressor backend to estimate conditional variance);
\emph{sortnregress}~\citep{reisach2021beware}, a two-steps strategy that orders nodes by increasing variance and selects the parents of a node among all its predecessors using the Least Angle Regressor~\citep{efron2004least};
\emph{BCDNets}~\citep{cundy2021bcd} and 
\emph{VI-DP-DAG}~\citep{dpdag}, the two differentiable, probabilistic methods described in Section~\ref{sec:difforder}.
Before evaluation, we post-process the graphs found by \emph{NoTears} and \emph{Golem} by first removing all edges with absolute weights smaller than $0.3$ and then iteratively removing edges ordered by increasing weight until obtaining a DAG, as the learned graphs often contain cycles.

\noindent\textbf{Metrics}
We compare the methods by two metrics, assessing the quality of the estimated graphs:
the Structural Hamming Distance (\emph{SHD}) between true and estimated graphs, which counts the number of edges that need to be added or removed or reversed to obtain the true DAG from the predicted one;
and the Structural Intervention Distance ~\citep[\emph{SID},][]{peters2015structural}, which counts the number of causal paths that are broken in the predicted DAG.
It is standard in the literature to compute both metrics, given their complementarity: 
SHD evaluates the correctness of individual edges, while SID evaluates the preservation of causal orderings.
We further remark here that these metrics privilege opposite trivial solutions.
Because true DAGs are usually sparse (i.e., their number of edges is much smaller than the number of possible ones), SHD favors sparse solutions such as the empty graph.
On the contrary, given a topological ordering, SID favors dense solutions (complete DAGs in the limit) as they are less likely to break causal paths. 
For this reason, we report both metrics and highlight the solutions on the Pareto front in Figure~\ref{fig:pareto}. 

\noindent\textbf{Hyper-parameters and training details}
We set the hyper-parameters of all methods to their default values.
For our method we tuned them by Bayesian Optimization based on the performance, in terms of SHD and SID, averaged over several synthetic problems from different SEMs.
For our method, we optimize the data likelihood (under Gaussian equal variance error assumptions, as derived in \citet[eq. 2,][]{ngG020}) and we instantiate $f^{\gparams_j}_j$ to a masked linear function (linear) or as a masked MLP
as for \emph{NoTears-nonlinear}. Because our approach allows for modular solving of the functional parameters $\Gparams$, we also experiment with using Least Angle Regression (LARS) \citep{efron2004least}.
We additionally apply $l_2$ regularizations on $\{\params, \Gparams\}$ to stabilize training, and we standardize all datasets to ensure that all variables have comparable scales.
More details are provided in Appendix~\ref{sec:add-expe}. 
The code for running the experiments is available at \url{https://github.com/vzantedeschi/DAGuerreotype}.

\begin{figure}
    \centering
    \includegraphics[width=0.9\textwidth]{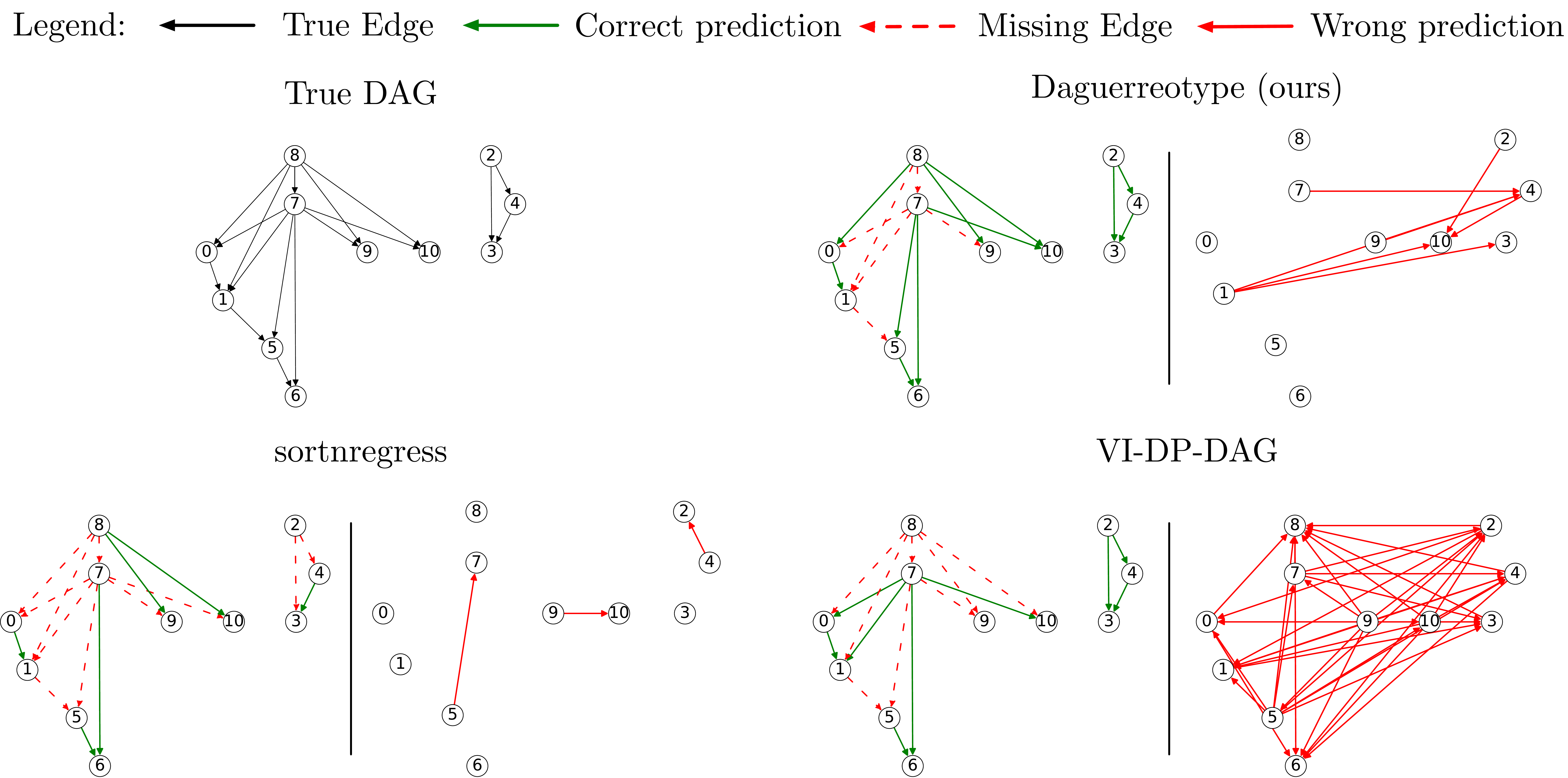}
    \caption{\emph{Sachs}. True DAG and DAGs predicted by the best-performing methods. We plot on the left of the bar correct and missing edges and on the right of the bar wrong edges found by each method. \method strikes a good balance between SID and SHD; other methods focus overly on one over the other by either predicting too few (sortnregress) or too many edges (VI-DP-DAG). 
    }
    \label{fig:example_dags}
\end{figure}

\subsection{Results} \label{sec:results}
We report the main results in Figure~\ref{fig:pareto}, where we omit some baselines (e.g., \method with sparseMAP) for sake of clarity.
We present the complete comparison in Appendix~\ref{sec:add-expe}, together with additional metrics and studies on the sensitivity of \method depending on the choice of key hyper-parameters.
To provide a better idea of the learned graphs, we also plot in Figure~\ref{fig:example_dags} the graphs learned by the best-performing methods on \emph{Sachs}.
We observe that the solutions found by NPVAR and the matrix-exponential regularized methods (NoTears and Golem) are the best in terms of SHD, but have the worst SIDs.
This can be explained by the high sparsity of their predicted graphs (see the number of edges in Tables~\ref{tab:sachs2} and~\ref{tab:syntren2}).
On the contrary, the high density of the solutions of VI-DP-DAG makes them among the best in terms of SID and the worst in terms of SHD.
\method provides solutions with a good trade-off between these two metrics and which lie on the Pareto front.
Inevitably its performance strongly depends on the choice of edge estimator for the problem at hand. 
For instance, the linear estimator is better suited for \emph{Sachs} than for \emph{SynTReN}.
In Appendix~\ref{sec:add-expe}, we assess our method's performance independently from the quality of the estimator with experiments on synthetic data where the underlying SEM is known, and the estimator can be chosen accordingly.

\section{Conclusion, Limitations and Future Work}
In this work, we presented \emph{\method}, a permutation-based method for end-to-end learning of directed acyclic graphs. 
While our approach shows promising results in identifying DAGs, the optimization procedure can still be improved.
Alternative choices of estimators, such as Generalized Additive Models~\citep{hastie2017generalized} as done in CAM and NPVAR, could be considered to improve identification.
Another venue for improvement would be to include interventional datasets at training.
It would be interesting to study in this context whether our framework is more sample-efficient, i.e., allows us to learn the DAG with fewer interventions or observations.



\section*{Acknowledgements}
We are grateful to Mathieu Blondel, Caio Corro, Alexandre Drouin and S\'ebastien Paquet for discussions.
Part of this work was carried out when VZ was affiliated with INRIA-London and University College London.
Experiments presented in this paper were partly performed using the Grid'5000 testbed, supported by a scientific interest group hosted by Inria and including CNRS, RENATER and several Universities as well as other organizations (see \url{https://www.grid5000.fr}).
VN acknowledges support from the Dutch Research Council
(NWO) project VI.Veni.212.228.

\bibliographystyle{iclr}
\bibliography{ref}

\appendix

\newpage

\section{Top-k oracle}\label{sec:topk}
In this section, we propose an efficient algorithm for finding the top-k scoring rankings of a vector and prove its correctness.
We leverage this result in order to apply the sparsemax operator~\citep{correia2020efficient} to our DAG learning problem.
We are not aware of existing works on this topic and believe this result is of independent interest.

\subsection{Notation and definitions}
Let us denote the objective function $g_{\params}(\sigma) = \DP{\params}{\ixs^\sigma}$ evaluating the quality of a permutation $\sigma$, and denote one of its maximizers as 
$\sigma^1 \in \argmax_{\sigma \in \perms} \; g_{\params}(\sigma) $, corresponding to the argsort of $\params$.
Notice that we can equivalently write this objective as $g_{\params}(\sigma) = \DP{\params^\sigma}{\ixs}$ by applying the permutation to $\params$.

\begin{definition}[Top-$k$ permutations]
    We denote by $\topset{k} \subseteq \perms$ a sequence of $K$ highest-scoring permutations according to $g_{\params}$ i.e., $\topset{k} = \{\sigma^1, \sigma^2, \dots, \sigma^{k}\} $ and for any $\sigma' \notin \topset{k}$:

\[
g_{\params}(\sigma^1) \geq
g_{\params}(\sigma^2) \geq \ldots \geq
g_{\params}(\sigma^{k}) \geq
g_{\params}(\sigma').
\]
\end{definition}

In this section, we will make use of the following definition and lemma.

\begin{definition}[Adjacent transposition]
    $\sigma j := \sigma~(j~j{+}1)$ denotes a $2$-cycle of the components of the vector $\sigma$, which transposes (flips) the two consecutive elements $\sigma_j$ and $\sigma_{j+1}$.
\end{definition}

\begin{lemma}\label{th:adjtrans}

Let $\sigma \in \Sigma_d$ be a permutation.
Exactly one of the following holds.

\begin{enumerate}
\item \(g_{\params}(\sigma~j) = g_{\params}(\sigma^1)\),
\item There exists an adjacent transposition $(j~j{+}1)$
such that $\sigma j $ satisfies
$g_{\params}(\sigma) > g_{\params}(\sigma^1)$.
\end{enumerate}
\end{lemma}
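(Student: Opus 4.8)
Everything reduces to a single identity describing the effect of one adjacent transposition on the objective. I would first compute $g_{\params}(\sigma j) - g_{\params}(\sigma)$ in closed form, then combine it with the rearrangement inequality to characterise the maximisers of $g_{\params}$, and finally read off the dichotomy. For the first step, write $g_{\params}(\sigma) = \DP{\params^{\sigma}}{\ixs} = \sum_{i=1}^{d} i\,(\params^{\sigma})_i$ and note that $\sigma j = \sigma\,(j\ j{+}1)$ leaves $\params^{\sigma}$ unchanged except that the entries in positions $j$ and $j+1$ are swapped. A two-term cancellation then gives
\[
g_{\params}(\sigma j) - g_{\params}(\sigma) \;=\; (j{+}1)\,(\params^{\sigma})_j + j\,(\params^{\sigma})_{j+1} - j\,(\params^{\sigma})_j - (j{+}1)\,(\params^{\sigma})_{j+1} \;=\; (\params^{\sigma})_j - (\params^{\sigma})_{j+1}.
\]
So an adjacent transposition at position $j$ strictly increases $g_{\params}$ exactly when $\params^{\sigma}$ has a descent there, i.e.\ $(\params^{\sigma})_j > (\params^{\sigma})_{j+1}$, and it never increases $g_{\params}$ when $\params^{\sigma}$ is weakly increasing.

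For the second step I would invoke the rearrangement inequality \citep[Thms.~368--369]{hardy1952inequalities}: since $\ixs = [1,\dots,d]$ is strictly increasing, $g_{\params}(\sigma) = g_{\params}(\sigma^1)$ (the maximum) holds if and only if $\params^{\sigma}$ is weakly increasing. The "if" direction is immediate because then $(\params^{\sigma})_i$ is the $i$-th smallest entry of $\params$, so the value is the maximal $\sum_i i\,\params_{(i)}$; the "only if" direction is the contrapositive of the previous paragraph, since a descent produces a strictly improving transposition and hence $g_{\params}(\sigma) < g_{\params}(\sigma^1)$.

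For the last step I would case-split on whether $\params^{\sigma}$ is weakly increasing. If it is, alternative~1 holds by the characterisation just established, while the closed-form identity gives $g_{\params}(\sigma j) - g_{\params}(\sigma) = (\params^{\sigma})_j - (\params^{\sigma})_{j+1} \le 0$ for every adjacent transposition, so alternative~2 fails. If it is not, it has at least one descent $j$, so $g_{\params}(\sigma j) > g_{\params}(\sigma)$ and alternative~2 holds; moreover $g_{\params}(\sigma) < g_{\params}(\sigma j) \le g_{\params}(\sigma^1)$, so alternative~1 fails. The two cases are exhaustive, and they are mutually exclusive because alternative~1 makes $g_{\params}(\sigma)$ globally maximal and therefore unbeatable by any $g_{\params}(\sigma j)$; hence exactly one of the two alternatives holds.

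\textbf{Where the difficulty lies.} There is essentially no obstacle --- the swap identity is a one-line computation and the rest is bookkeeping. The only point needing care is ties in $\params$: the maximiser characterisation must be phrased with "weakly increasing" rather than "strictly increasing", and this weak version is also exactly what keeps alternatives~1 and~2 from ever holding simultaneously.
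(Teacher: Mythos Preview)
Your proposal is correct and follows essentially the same route as the paper: both arguments compute the swap identity $g_{\params}(\sigma j) - g_{\params}(\sigma) = (\params^{\sigma})_j - (\params^{\sigma})_{j+1}$, appeal to the rearrangement inequality to characterise maximisers as those $\sigma$ with $\params^{\sigma}$ weakly increasing, and case-split on whether a descent exists. You are more explicit than the paper about the mutual-exclusivity half of ``exactly one'' and about the role of ties, but the underlying idea is identical.
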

\begin{proof}
Denote by \( \params^\sigma\) the permutation of \( \params \) by \(\sigma\) and let
\( J = \{ j \in [d{-}1] \;|\; \params^\sigma_j > \params^\sigma_{j{+}1}\} \).
If \(J=\emptyset\) then \(\params^\sigma\)
is in increasing order, so by the Rearrangement Inequality
\citep[Thms.~368--369]{hardy1952inequalities} $\sigma$ is a 1-best
permutation. Otherwise, applying the adjacent transposition \((j~j{+}1)\)
increases the score:
\[
g_{\params}(\sigma~j) -
g_{\params}(\sigma) =
\params^\sigma_j - \params^\sigma_{j{+}1} > 0\,.
\]
\end{proof}




\subsection{Best-first Search Algorithm}
Algorithm~(\ref{alg:topk}) finds the set of $k$-best scoring permutations given $\params$.
Starting from an optimum of $g_{\params}$, 
the algorithm grows the set of candidate permutations 
$\frontset$ 
by adding all those that are one adjacent transposition away from the $i$th-best permutation at iteration $i$.
It then selects the best scoring permutation in $\frontset$ to be the top-$(i{+}1)$ solution.
Theorem~\ref{th:correct} proves that an $i{+}1$th-best solution must lie in this set $\frontset$, hence that Algorithm~(\ref{alg:topk}) is correct.

\begin{theorem}[Correctness of Algorithm~(\ref{alg:topk})]\label{th:correct}
Given a sequence of $k{-}1$-best permutations $\topset{k{-}1} = \{\sigma^1, \sigma^2, \dots, \sigma^{k{-}1}\} $,
there exists a $k$-th best \(\sigma^k\), i.e., one satisfying
\(g_{\params}(\sigma^{k{-}1}) \geq
g_{\params}(\sigma^{k}) \geq
g_{\params}(\sigma')
\)
for any $\sigma' \notin \topset{k{-}1}$,
with the property that
\(\sigma^k = \sigma^i j\) for some
\(i \in \{1, \ldots, k{-}1\}\) and
\(j \in \{1, \ldots, d{-}1\}\).

\end{theorem}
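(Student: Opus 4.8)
The plan is to show that some $k$-th best permutation can be reached from the set $\topset{k-1}$ by a single adjacent transposition, by arguing contrapositively about the score structure induced by sorting $\params$. Fix any $\sigma^\star$ that is a genuine $k$-th best permutation, i.e.\ $g_{\params}(\sigma^{k-1}) \geq g_{\params}(\sigma^\star) \geq g_{\params}(\sigma')$ for all $\sigma' \notin \topset{k-1}$ (such a $\sigma^\star$ exists because $\Sigma_d$ is finite). If $\sigma^\star$ itself is already of the form $\sigma^i j$ for some $i \leq k-1$, we are done. Otherwise, the goal is to exhibit a \emph{different} permutation $\sigma^k$ with the same score $g_{\params}(\sigma^k) = g_{\params}(\sigma^\star)$ that does have this form; since permutations with equal score are interchangeable in the definition of ``$k$-th best,'' this suffices.

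First I would recall the connected-swap structure of the permutahedron: starting from $\sigma^\star$ and repeatedly applying \emph{score-non-decreasing} adjacent transpositions, one reaches a sorted permutation $\sigma^1$ (a $1$-best) in finitely many steps, by Lemma~\ref{th:adjtrans} — at each step either $\params^\sigma$ is already sorted, or there is an adjacent transposition strictly increasing the score. Reverse this chain: there is a sequence $\sigma^1 = \tau_0, \tau_1, \dots, \tau_m = \sigma^\star$ where each $\tau_{\ell+1} = \tau_\ell j_\ell$ is obtained by an adjacent transposition and $g_{\params}(\tau_0) \geq g_{\params}(\tau_1) \geq \dots \geq g_{\params}(\tau_m) = g_{\params}(\sigma^\star)$. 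Walk along this chain from $\tau_0$ and let $\ell^\star$ be the \emph{last} index such that $\tau_{\ell^\star} \in \topset{k-1}$ (well-defined, since $\tau_0 = \sigma^1 \in \topset{k-1}$). Then $\tau_{\ell^\star+1} = \tau_{\ell^\star} j_{\ell^\star}$ is one adjacent transposition away from a member of $\topset{k-1}$, and it satisfies $g_{\params}(\sigma^{k-1}) \geq g_{\params}(\tau_{\ell^\star}) \geq g_{\params}(\tau_{\ell^\star+1}) \geq g_{\params}(\tau_m) = g_{\params}(\sigma^\star)$.

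The key remaining point is the reverse inequality $g_{\params}(\tau_{\ell^\star+1}) \leq g_{\params}(\sigma^\star)$, i.e.\ that $\tau_{\ell^\star+1}$ is not strictly better than a $k$-th best permutation: if $\tau_{\ell^\star+1} \notin \topset{k-1}$, this is immediate from the defining property of $\sigma^\star$ as a $k$-th best; and if $\tau_{\ell^\star+1} \in \topset{k-1}$ it would contradict maximality of $\ell^\star$. Combining both directions gives $g_{\params}(\tau_{\ell^\star+1}) = g_{\params}(\sigma^\star)$, so we may take $\sigma^k := \tau_{\ell^\star+1} = \sigma^{i} j$ with $\sigma^i := \tau_{\ell^\star}$ and $j := j_{\ell^\star}$, establishing the claim. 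I expect the main obstacle to be stating carefully why score-monotone chains to a sorted permutation exist and can be reversed with the inequalities preserved — this is where Lemma~\ref{th:adjtrans} and the Rearrangement Inequality do the real work — together with the bookkeeping needed to handle ties in $\params$ (where multiple permutations share the top score) so that ``some $k$-th best'' rather than ``the $k$-th best'' is what gets proven.
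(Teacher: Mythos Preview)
Your chain argument and the last-index analysis are sound, but there is a gap in justifying that the reversed chain actually starts inside $\topset{k-1}$. Lemma~\ref{th:adjtrans} only guarantees that from any \emph{non-optimal} permutation there is an adjacent transposition strictly increasing the score; once a $1$-best is reached it yields nothing further. Hence when $g_{\params}(\sigma^\star) = g_{\params}(\sigma^1)$ --- i.e., $\sigma^\star$ is itself a $1$-best, which is exactly what happens when at least $k$ permutations are tied for first --- the chain obtained from the lemma has length zero: $\tau_0 = \sigma^\star \notin \topset{k-1}$, so no valid $\ell^\star$ exists and the argument stalls. To repair this you must show separately that the set of $1$-best permutations is connected by score-\emph{preserving} adjacent transpositions, so that the chain can be extended from $\sigma^\star$ to the specific $\sigma^1 \in \topset{k-1}$ along a face of the permutahedron. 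This is not bookkeeping; it is the substance of the paper's Case~2 (two equal-score permutations differ only on tied coordinates of $\params$ and are therefore joined by a path of equal-score adjacent swaps), and it is precisely the piece you flag but do not supply.

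Once that piece is in place your approach and the paper's coincide. In the non-tied case your chain is in fact more elaborate than necessary: since every $\tau_\ell$ with $\ell < m$ has score strictly greater than $g_{\params}(\sigma^\star)$ and therefore already lies in $\topset{k-1}$, one always has $\ell^\star = m-1$, and your conclusion collapses to the single application of Lemma~\ref{th:adjtrans} to $\sigma^\star$ that the paper makes directly in its Case~1.
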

\begin{proof}
Let \(\sigma^k\) be a k-th best permutation.

\underline{Case 1.}
\(g_{\params}(\sigma^k) \neq g_{\params}(\sigma^1)\).
Invoking Lemma~\ref{th:adjtrans} we have
\(g_{\params}(\sigma^{k}j) > g_{\params}(\sigma^{k})\).
Since the inequality is strict, we must have
\(\sigma^k j \in \topset{k{-}1} \).

\underline{Case 2.}
\(g_{\params}(\sigma^k) = g_{\params}(\sigma^1)\).
In this case, we have at least $k$ permutations tied for first place.
Any two permutations with equal score can only differ
in indices that correspond to ties in \( \params \). Therefore, any two tied
permutations are connected by a trajectory of transpositions of equal score.
This is a face of the permutahedron, of size $c>K$,
containing
\( \topset{k{-}1} \).
This face must contain at least one permutation that
is one adjacent transposition
away from one of \(\topset{k{-}1}\), and we may as well take this one as the \(k\)-th best instead of \(\sigma^k\).
\end{proof}

\begin{algorithm}[H]
\caption{Top-$k$ permutations.}\label{alg:topk}
\KwData{$k \in \{1, \dots, d!\}$, $\params \in \mathbb{R}^d$}
\KwResult{top-$k$ permutations $\topset{k}$}
$\frontset \gets \{\sigma^1 \in_{R} \argmax_{\sigma \in \Sigma_d} g_{\params}(\sigma)\}$ \tcc{initialize set of candidates \quad with an optimum}
\While{$ |\topset{k}| \; \leq k $} {
    $\sigma \in_R \argmax_{\sigma \in \frontset \setminus \topset{k}} g_{\params}(\sigma)$ \tcc{retrieve a best scoring solution among the candidates that has not been retrieved yet}
    $\frontset \gets \frontset \cup \{\sigma j \;{|}\; j \in \{1, \dots, d{-}1\}\}$ \tcc{add its one adjacent transposition away permutations to the candidates}
    $\topset{k} \gets \topset{k} \cup \{ \sigma \}$ \tcc{update set of best permutations}
}
\end{algorithm}

\subsection{Computational analysis}
Finding $\sigma^1$ requires sorting a vector of size $d$ (hence $O(d \log d)$ complexity).
Then, at each iteration $k$ of Algorithm~(\ref{alg:topk}), the maximum among the best candidates $\frontset$ needs to be found, which requires $O(dk)$ complexity as $|\frontset| \leq (d-2)k$, and $O(d)$ adjacent flip operations are applied to it.

The most expensive operation is checking that the selected best candidate is not already in $\topset{k-1}$.
At worst, this requires going through the whole $\frontset$ and comparing them in order to 
all top-k solutions,
so no more than $K \times |\frontset|$ comparisons of cost $O(d)$ each.

This leads to an overall time complexity of $O(K^2 d^2)$.
The space complexity is dominated by the size of $\frontset$, which contains at most $Kd$ vectors of size $d$, leading to $O(Kd^2)$.

\section{L0 Regularization}
\label{sec:l0}
In the linear and non-linear variants of \method, we implement the regularization term $\Omega_{\xi}$ of the inner problem of Eq. (\ref{eq:bilevel}) with an approximate L0 regularizer.
The exact L0 norm
\begin{equation}
    \label{eq:L00}
    ||\mathbf{w}||_0 = \sum_{i=1}^n \mathbf{1}_{w_j \neq 0}
\end{equation}
counts the number of non-zero entries of $\mathbf{w}\in\mathbb{R}^n$ and, when used as regularizer, it favors sparse solutions without injecting other priors.
However, its combinatorial nature and its non-differentiability make its optimization intractable. 
Following \cite{louizos2017learning}, we reparameterize (\ref{eq:L00}) by introducing a set of binary variables $\mathbf{z}\in \{0, 1\}^n$ and letting $\mathbf{w}=\mathbf{\hat{w}}\circ \mathbf{z}$, sothat $||\mathbf{w}||_0=\sum z_i$.
Next, we let $\mathbf{z}\sim p(\mathbf{z}; \zparams) =  \mathrm{Bernoulli}(\zparams)$ where $\zparams\in[0, 1]^d$.
For linear SEMs, we can now reformulate the Inner Problem (\ref{eq:bilevel}) as follows:
\begin{equation}
    \label{eq:inner_prob_l0}
    \min_{\mathbf{\hat{w}}, \zparams} 
    \sum_{j=1}^d 
    \left(
    \mathbb{E}_{z_j \sim p(z_j| \pi_j)} \left[
        \ell\left(\vx_j,
            f^{\hat{w}_j\circ z_j}_j\left(\mathbf{X}, \left(\mathbf{M}^\sigma\right)_{j}\right)\right)
    \right]
    + \xi \sum_{i=1}^d \pi_{ji}
    \right);
\end{equation}
where now the decision (inner) variables are intended to be matrices and $\xi\geq 0$ is a hyperparameter.
In the non-linear (MLP) case, we achieve sparsity at the graph level by group-regularizing the parameters corresponding to each input variable. 
In the experiments, we optimize (\ref{eq:inner_prob_l0}) using the one-sample Monte Carlo straight-through estimator and set the final functional parameters as
\begin{equation}
    \mathbf{w}_* = \mathbf{\hat{w}}_* \circ  \mathrm{MAP}\left[p(\,\cdot\,; \zparams)
    \right] = 
    \mathbf{\hat{w}}_* \circ H\left(\zparams - \frac{1}{2}\boldsymbol{1}\right);
\end{equation}
where $H$ is the Heaviside function.
We leave the implementation of more sophisticated strategies, such as the relaxation with the hard-concrete distribution presented in \citep{louizos2017learning}
or other estimators \citep[e.g.][]{paulus2020gradient, niepert2021implicit}, to future work.

\section{Characterization and sensitivity of the vector parametrization}
\label{sec:sensitivity}
In this section, we provide some intuition into the behavior of our score vector parameterization in the space of complete DAGs. 
More precisely, we look at the Maximum A Posteriori (MAP) complete DAG 
\begin{equation}
    M^{\sigma^*}\in\mathbb{D}_{\mathrm{C}}[d] \quad \text{where} \quad \sigma^* \in \argmax \DP{\params}{\ixs_\sigma}  
    \label{eq:map:cdag}
\end{equation}
and how it varies as a function of the score vector $\params \in \mathbb{R}^d$.
The permutation $\sigma^*$ is a MAP state (or mode) of both the sparseMAP and the sparsemax distributions (as well as the standard categorical/softmax distribution).
For brevity, we shall rename $M^{\sigma^*}:=M(\params)$ in the following.

We choose to work on the space of complete DAGs because there is a one-to-one correspondence between topological orderings and complete DAGs.
This is not generally true when analyzing the space of DAGs, as a permutation does not uniquely identify a DAG and vice versa.

\subsection{Preliminary}
For the results of this section, we will use the following relationship between transpositions (adjacent or not) and SHD.

\begin{proposition}[SHD difference after a flip]\label{prop:shd-flip}
    Consider $\params \in \mathbb{R}^d$ and $\params'$ which is obtained by applying a flip $(i \; j)$ with the convention that $i < j$.
    All the edges from nodes between $i$ and $j$ directed towards $i$ or $j$ need to be reversed.
    Thus, the SHD between complete DAGs $\mathrm{SHD}(M(\params), M(\params')) = 2(j-i) - 1$. (because no new undirected edges are added, no undirected edges are removed, and $2(j-i) - 1$ edges are reversed).
\end{proposition}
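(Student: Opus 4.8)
The plan is a direct edge-counting argument that exploits the fact that all complete DAGs on $d$ nodes share the same (complete) skeleton. First I would pass to a canonical labelling: since $\mathrm{SHD}$, $i$ and $j$ are unaffected by relabelling nodes, I may assume the topological order realised by $M(\params)$ is $1 \prec 2 \prec \cdots \prec d$, so that $M(\params)$ is the complete DAG with an edge $u \to w$ exactly when $u < w$, and that $M(\params')$ realises the order obtained from this one by interchanging the nodes occupying ranks $i$ and $j$. Denote by $a$ the node at rank $i$ and by $b$ the node at rank $j$ in $M(\params)$. Since the undirected skeleton of any complete DAG on $d$ nodes is the complete graph $K_d$, the graphs $M(\params)$ and $M(\params')$ have identical skeletons; hence no edge is inserted or deleted, and $\mathrm{SHD}(M(\params),M(\params'))$ is exactly the number of unordered node pairs whose edge orientation differs between the two graphs.

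Next I would count those pairs. In a complete DAG the edge between two nodes points from the lower-ranked to the higher-ranked one, and the transposition of ranks $i$ and $j$ changes only the rank of $a$ (from $i$ to $j$) and of $b$ (from $j$ to $i$), leaving every other rank fixed. So a pair $\{u,w\}$ has its orientation reversed precisely when the relative order of the ranks of $u$ and $w$ flips, and a short case analysis shows this happens exactly for: (i) the pair $\{a,b\}$ itself; (ii) the pairs $\{a,w\}$ with $w$ a node strictly between ranks $i$ and $j$, of which there are $j-i-1$; and (iii) the pairs $\{b,w\}$ with $w$ strictly between ranks $i$ and $j$, again $j-i-1$ of them. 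Every remaining pair — those incident to $a$ or $b$ whose other endpoint lies outside the rank interval $[i,j]$, and those with both endpoints strictly between $i$ and $j$ or both outside — keeps its relative rank order, hence its orientation. Summing yields $1 + (j-i-1) + (j-i-1) = 2(j-i)-1$ reversed edges, which is the asserted value of $\mathrm{SHD}$.

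I would close by checking consistency with the adjacent-transposition case $j = i+1$, where the formula returns $\mathrm{SHD}=1$, as expected, and with the extreme case $i=1$, $j=d$. The argument is conceptually routine; the only points requiring care are bookkeeping ones: verifying that the three families of reversed pairs in the second step are pairwise disjoint and exhaustive (so nothing is double-counted or missed), and ensuring $\mathrm{SHD}$ is evaluated with a reversal counting as a single operation, in line with the definition used in this paper rather than the convention that charges one deletion plus one insertion. An alternative would be to write $(i\;j)$ as a product of $2(j-i)-1$ adjacent transpositions and combine the adjacent-swap case with the triangle inequality, but that gives only the upper bound and still needs a matching lower bound; the direct count produces equality at once, so I would prefer it.
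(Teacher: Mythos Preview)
Your argument is correct and is essentially the same direct edge-counting reasoning that the paper sketches inline within the proposition statement itself: both observe that the skeleton is unchanged and that the reversed edges are exactly those incident to the two swapped nodes whose other endpoint lies (weakly) in the rank interval $[i,j]$, giving $1 + 2(j-i-1)=2(j-i)-1$. Your write-up is simply more explicit about the case analysis and the disjointness of the three families, which the paper leaves implicit.
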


From Proposition~(\ref{prop:shd-flip}) we deduce that the SHD difference after applying an adjacent flip is $\mathrm{SHD}(M(\params), M(\params')) = 1$.


\subsection{Analysis}
\begin{figure}[t]
    \centering
    \includegraphics{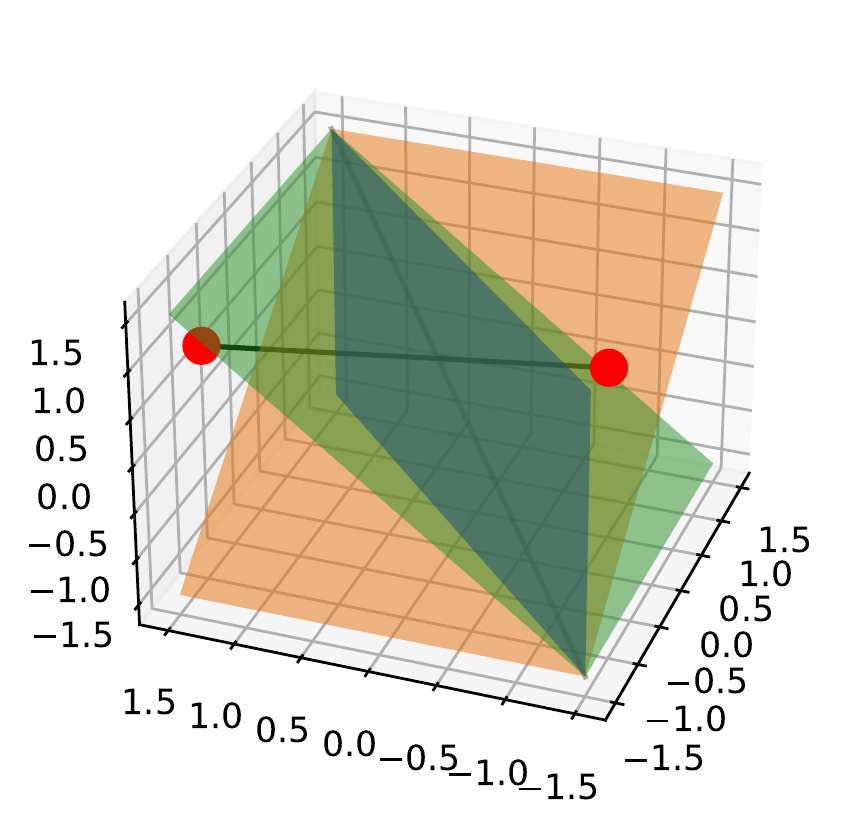}
    \caption{Representation of the degeneracy hyper-planes in $\mathbb{R}^3$ and of two-parameter points (in red) whose connecting segment intersects two hyper-planes.}
    \label{fig:hyperplanes}
\end{figure}

Recall that $\sigma^*$ sorts the elements of $\params$ in increasing order. 
Then, the points $\params \in \mathbb{R}^d$ where $\argmax \DP{\params}{\ixs_\sigma}$ is non-singleton (degeneracy) are exactly those that have at least one tie among their entries (i.e., $\exists \; i, j$ such that $\params_i = \params_j $). 
Following this simple observation, we can populate $\mathbb{R}^d$ with ${d \choose 2}$ hyper-planes (of dimensionality $d-1$), and call them $H_{i, j}$ with $i<j$, such that $\forall \params\in H_{i, j}, \params_i = \params_j$.   
Intersections of such hyper-planes are lower dimensional subspaces where more than 2 entries of $\params$ are equal.
For $d=3$, Figure \ref{fig:hyperplanes} depicts the ${3 \choose 2}=3$ hyper-planes in orange, green and blue, and their intersection (a line) in gray.

The hyper-planes $\{H_{i, j} \}_{i<j}$'s delimit exactly $d!$ open cones of the form $C = \{\params \in \mathbb{R}^d {|} \params^{\sigma}_{1} < \params^\sigma_{2} < \dots < \params^\sigma_d, \sigma \in \perms\}$, i.e., each cone is the space of all points that give the same MAP permutation and do not contain any ties.

This partition of the space allows us to reason about the sensitivity of our MAP estimates to changes to the parameters. 
For instance, as the points of a cone do not have any ties, changes to the score vector do not affect SHD: $\forall \params \in C, \params' \in C \; \mathrm{SHD}(M(\params), M(\params'))=0$.

We first analyze the sensitivity to single-entry changes, hence assessing how much an entry of $\params$ can be individually perturbed without changing its MAP.
Proposition~\ref{th:single-eps} provides the entry-wise ranges in which the optimal solution set does not change.
\begin{proposition}[Entry-wise intervals]\label{th:single-eps}
    For any $ \params \in \mathbb{R}^d$ and $\varepsilon_i \in \mathbb{R}$,
    $\argmax \DP{\params^\sigma}{\ixs} = \argmax \DP{\params^\sigma + \varepsilon_i \mathbf{e}_i}{\ixs}$ if and only if:
    \begin{itemize}
        \item $\forall i \in [2, \dots, d-1]$,  $\varepsilon_i \in (\params^{\sigma^\star}_{i-1} -\params^{\sigma^\star}_{i}, \params^{\sigma^\star}_{i+1} - \params^{\sigma^\star}_i)$;
        \item $i = 1$, $\varepsilon_i \in (-\infty, \params^{\sigma^\star}_{i+1} - \params^{\sigma^\star}_i) $;
        \item $i = d$, $\varepsilon_i \in (\params^{\sigma^\star}_{i-1} -\params^{\sigma^\star}_{i}, \infty)$
    \end{itemize}
    where $\mathbf{e}_i$ denotes the $i$-th standard unit vector and $\sigma^\star \in \argmax \DP{\params}{\ixs^\sigma}$.
\end{proposition}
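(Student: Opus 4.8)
The plan is to translate the claim into a statement about the sorted vector and then read everything off from the equality case of the Rearrangement Inequality. First I would fix a sorting permutation $\sigma^\star$ and set $\vu := \params^{\sigma^\star}$, so that $u_1 \le u_2 \le \cdots \le u_d$; the perturbation in the statement then amounts to replacing the $i$-th entry of this sorted vector, producing $\vu' := \vu + \varepsilon_i\ve_i$ (the appearance of $\params^{\sigma^\star}_{i\pm1}=u_{i\pm1}$ in the conclusion makes clear this is the intended reading). Since $\ixs=[1,\dots,d]$ is strictly increasing, the Rearrangement Inequality \citep[Thms.~368--369]{hardy1952inequalities} -- already invoked for eq.~(\ref{eq:rank}) -- says that $\argmax_\sigma \DP{\vv^\sigma}{\ixs}$ is exactly the set of permutations that arrange $\vv$ in non-decreasing order. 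In the generic situation where $\params$ has pairwise-distinct entries this set is the singleton $\{\sigma^\star\}$ for $\vu$, and for $\vu'$ it equals $\{\sigma^\star\}$ if and only if $\vu'$ is still strictly increasing (a new tie would enlarge the set, an inversion would drop $\sigma^\star$ from it). So the argmax is preserved precisely when $\vu'$ remains strictly increasing.

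For the \textbf{if} direction I would observe that $\vu$ and $\vu'$ differ only in coordinate $i$ and $\vu$ is strictly increasing, so $\vu'$ is strictly increasing exactly when the new value $u'_i = u_i+\varepsilon_i$ lies strictly between its two neighbours: $u_{i-1} < u'_i < u_{i+1}$ for $1<i<d$, $u'_i < u_2$ for $i=1$, and $u'_i > u_{d-1}$ for $i=d$. Rearranging these three inequalities gives exactly the three intervals in the statement, and in particular $\varepsilon_i=0$ always lies inside. No tie with a non-adjacent coordinate can be created, since for $j<i<k$ we have $u_j \le u_{i-1} < u'_i < u_{i+1} \le u_k$, so inspecting only the immediate neighbours suffices -- this is the one place sortedness is essential.

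For the \textbf{only if} direction I would argue the contrapositive: if $\varepsilon_i$ sits at or beyond an endpoint of the stated interval, then $u'_i$ reaches or crosses a neighbouring value. For interior $i$, either $u'_i\in\{u_{i-1},u_{i+1}\}$, which creates a tie and strictly enlarges the argmax beyond $\{\sigma^\star\}$, or $u'_i<u_{i-1}$ or $u'_i>u_{i+1}$, in which case $\vu'$ is no longer non-decreasing and $\sigma^\star$ leaves the argmax; either way the argmax changes. The cases $i\in\{1,d\}$ are identical but with a single relevant neighbour. Together with the previous paragraph this closes the equivalence.

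The main obstacle is pinning down the hypotheses under which the open-interval characterisation is literally true. When $\params$ has repeated entries the argmax is a proper set, $\sigma^\star$ is just one of its members, and perturbing a single coordinate inevitably breaks some ties; the naive ``argmax unchanged'' reading then fails -- e.g. with sorted vector $(1,2,2,5)$ any $\varepsilon_2\neq 0$ already changes the argmax, while $\varepsilon_2=0$ does not belong to the stated open interval $(-1,0)$. I would therefore either add the explicit genericity assumption that the entries of $\params$ are pairwise distinct (under which the argument above is complete), or restate the conclusion as ``the total preorder that $\params^{\sigma^\star}$ induces on $\{1,\dots,d\}$ is unchanged'', which is what the neighbour-comparison argument actually proves and which specialises to the stated intervals exactly in the distinct-entry case. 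The remaining steps are routine manipulations of inequalities and of the Rearrangement Inequality.
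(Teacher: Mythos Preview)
The paper does not give a formal proof of this proposition: it is stated in Appendix~\ref{sec:sensitivity} without proof, preceded by the hyperplane/cone description of the partition of $\mathbb{R}^d$ and followed by the informal remark that ``if we gradually increase one coordinate $\params_i$ initially ranked $\sigma^\star_i$, its rank changes in the optimal ordering as soon as it becomes greater than the coordinate right after it in the ordering.'' Your argument via the Rearrangement Inequality is precisely the rigorous version of that remark, so in spirit you are doing exactly what the paper intends; there is no alternative proof in the paper to compare against.

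Your proof for the generic (pairwise-distinct) case is correct and complete, and your reading of the perturbation --- as acting on the $i$-th entry of the \emph{sorted} vector --- is the only one consistent with the intervals in the statement (the literal reading $\params^\sigma+\varepsilon_i\ve_i$ with $\sigma$ the optimization variable would add the constant $\varepsilon_i\,i$ to every score and leave the argmax unchanged for all $\varepsilon_i$, which is clearly not what is meant). You are also right that the ``for any $\params\in\mathbb{R}^d$'' quantifier is too strong: your $(1,2,2,5)$ example genuinely breaks both directions of the equivalence as written. The paper's surrounding discussion is implicitly restricted to the interiors of the open cones (no ties), which is exactly the hypothesis you propose adding; under that restriction the statement and your proof are correct, and your suggested reformulation in terms of preserving the total preorder is a clean way to handle the boundary cases the paper glosses over.
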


We can relate this result to changes in terms of SHD, by making the following observation:
if we gradually increase one coordinate $\params_i$ initially ranked $\sigma^\star_i$, its rank changes in the optimal ordering as soon as it becomes greater than the coordinate right after it in the ordering, entailing an adjacent transposition between the two coordinates\footnote{A similar reasoning also applies when decreasing the value of one coordinate instead.}.
Greater perturbations entail a longer sequence of adjacent transpositions.
By Proposition~\ref{prop:shd-flip}, this implies an increase in SHD of 1 for each transposition.
Visually, the SHD increases by $1$ every time the perturbed vector crosses a hyper-plane along its perturbation direction, as this corresponds to swapping two components that are adjacent when optimally sorted. 

For comparison, we can apply the same reasoning to the matrix parametrization of the linear assignment problem, deployed by \citet{cundy2021bcd} for learning permutation matrices.
In this context, we can leverage the edge sensitivity analysis reviewed in ~\citet[][Equations 6-7]{elad2020}.
As a general remark, it is not as intuitive to determine the entry-wise intervals for this problem, not only because we have $d^2$ variables instead of $d$ but especially because it requires solving a different assignment problem per entry.
Furthermore, the minimal perturbation that changes the MAP does not necessarily result in flipping two adjacent nodes, entailing an SHD relative to the optimal permutation of at least 1.
Consider, for example, the following $3\times3$ matrix parameter:
$$\mathbf{\Theta} = 
\begin{pmatrix}
16 & 16 & 15 \\
5 & 16 & 10 \\
16  & 9  & 10  \\
\end{pmatrix}
$$
Its optimal permutation matrix corresponds to the rank $(3, 1, 2)$ with a score of $29$.
The minimal perturbations on $\mathbf{\Theta}_{1, 3}$ or $\mathbf{\Theta}_{3, 2}$ that change the MAP result in the solution $(2, 1, 3)$ which has an SHD of $3$ w.r.t. the optimal (and the second best score of $31$).
For problems of scale larger than this example, the resulting SHD can take values up to $2d -3$ for non-adjacent transpositions.

We now generalize and formalize this relationship between perturbations and SHD changes for general vector perturbations (global sensitivity).
Theorem~(\ref{th:connecting-segment}) upper bounds the $\mathrm{SHD}$ between complete DAGs of any pair of score vectors by the number of hyper-planes crossed by the segment connecting them.

\begin{theorem}[Global sensitivity]\label{th:connecting-segment}
For any $ \params \in \mathbb{R}^d$ and $\params' \in \mathbb{R}^d$
\begin{equation}
    \mathrm{SHD}(M(\params), M(\params')) \leq \int_{t\in[0,1]} \sum_i\sum_{j> i}\delta_{H_{i,j}}(\params + t(\params' - \params)) \, \mathrm{d}t
\end{equation}
where $\delta_A(x)$ is the (generalized) Dirac delta that evaluates to infinity if $x\in A$ and $0$ otherwise. 
\end{theorem}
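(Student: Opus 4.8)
The plan is to reduce the global statement to the single-flip case already handled by Proposition~\ref{prop:shd-flip}, using the partition of $\mathbb{R}^d$ into the $d!$ open cones $C_\sigma$ together with the degeneracy hyper-planes $H_{i,j}$. The right-hand side of the claimed inequality is, by construction of the Dirac delta, equal to $+\infty$ whenever the segment $[\params,\params']$ lies inside a hyper-plane on a set of positive length, or passes through an intersection of two or more hyper-planes; in all those cases the bound is vacuous, so I only need to treat the generic situation in which the segment meets the hyper-plane arrangement transversally in finitely many points, each lying on exactly one $H_{i,j}$. In that case the integral on the right simply counts (with multiplicity) the number of such crossing points.

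First I would parametrize the segment as $\params(t) = \params + t(\params' - \params)$ for $t \in [0,1]$ and let $0 < t_1 < t_2 < \dots < t_m < 1$ be the crossing times, so that on each open subinterval $(t_{k-1}, t_k)$ the point $\params(t)$ stays inside a single cone $C_{\sigma^{(k)}}$, with consecutive cones $C_{\sigma^{(k)}}$ and $C_{\sigma^{(k+1)}}$ sharing the facet on which $\params(t_k)$ sits. Since $\params(t_k)$ lies on exactly one hyper-plane $H_{i,j}$ and on no intersection, exactly one pair of coordinates ties there, and it is a pair that is \emph{adjacent} in the sorted order on either side; hence $\sigma^{(k+1)}$ is obtained from $\sigma^{(k)}$ by a single adjacent transposition. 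By the remark following Proposition~\ref{prop:shd-flip} (the adjacent-flip case, giving $\mathrm{SHD}=1$), we get $\mathrm{SHD}\big(M(\params(t)), M(\params(s))\big) = 1$ for $t \in (t_{k-1},t_k)$, $s \in (t_k, t_{k+1})$. Because $M(\cdot)$ is constant on each cone, $M(\params)$ equals its value just after $t=0$ and $M(\params')$ equals its value just before $t=1$; if $\params$ or $\params'$ themselves lie on a hyper-plane the argument still works after replacing them by nearby interior points, using that the MAP set only grows on the boundary.

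The final step is the triangle inequality for $\mathrm{SHD}$ (it is a genuine metric on adjacency matrices, being a Hamming distance): chaining across the $m$ crossings,
\begin{equation}
\mathrm{SHD}(M(\params), M(\params')) \;\le\; \sum_{k=1}^{m} \mathrm{SHD}\big(M(\params(t_{k-1}^+)), M(\params(t_k^+))\big) \;=\; m,
\end{equation}
and $m$ is exactly the value of the integral on the right-hand side in the generic case, which completes the proof. I expect the main obstacle to be the careful bookkeeping of the degenerate cases — segments that graze a hyper-plane, lie within one, or cross a codimension-$\ge 2$ stratum — and making precise the claim that at a transversal single-hyper-plane crossing the two adjacent cones differ by an \emph{adjacent} transposition (as opposed to an arbitrary one); this is intuitively clear from the cone description $C_\sigma = \{\params^\sigma_1 < \dots < \params^\sigma_d\}$, since crossing $H_{i,j}$ transversally flips the relative order of exactly the two coordinates that were equal there, which must have been neighbors in the sorted sequence, but it deserves a clean one-line justification. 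Everything else reduces to Proposition~\ref{prop:shd-flip} and the metric property of $\mathrm{SHD}$.
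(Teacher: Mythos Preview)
Your proposal is correct and follows essentially the same route as the paper: decompose the segment by its hyperplane crossings, observe that each transversal crossing of a single $H_{i,j}$ corresponds to an adjacent transposition of the sorted order (hence contributes $1$ to $\mathrm{SHD}$ by the adjacent-flip case of Proposition~\ref{prop:shd-flip}), and accumulate over the $m$ crossings. The paper's write-up is terser and slightly sharper---it identifies $\mathrm{SHD}(M(\params),M(\params'))$ directly with the minimal number of adjacent flips (the Kendall-tau inversion count), which equals the number of hyperplane crossings, yielding equality in the generic case rather than your upper bound via the triangle inequality---but the underlying argument is the same, and your more explicit treatment of the degenerate cases (where the right-hand side is $+\infty$) is a welcome addition.
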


\begin{proof}
    Let us first consider the case where $ \params \in C$ and $\params' \in C'$, i.e., the two vectors do not contain ties. 
    Let us denote $\sigma$ (respectively $\sigma'$) the permutation that sorts the components of a point of $C$ ($C'$) by increasing order.
    The minimal-length sequence of adjacent flips that need to be applied to $\sigma$ to obtain $\sigma'$ has a length equal to the number of times the segment connecting their parameters crosses a hyper-plane.
    Then the SHD between their complete DAGs equals the minimal number of adjacent flips, which proves the result.
    When either $\params$ or $\params'$ lies on a hyper-plane, the minimal required number of adjacent flips might be smaller, hence the upper bound in Theorem~\ref{th:connecting-segment}. 
\end{proof}
In Figure \ref{fig:hyperplanes}, the segment connecting the two red dots intersects the green and blue hyperplanes and hence the resulting complete DAGs will have an SHD of at most 2 (in fact, exactly 2 in this case).

This intuitive characterization links the SHD distance in the complete DAG space to a partition of $\mathbb{R}^d$ resulting from the "sorting" operator (i.e. the MAP, or maximizer of the linear program) of the score vector $\params$. This implies that, during optimization, if $\params_k$ are in the interior of any cone (which happens almost surely)
then it is very likely that updating the parameters results in small changes to the SHD unless the parameters have all similar values.

A deeper analysis of the vector parameterization is the object of future work, as we hope it can fuel further improvements in the optimization algorithm, such as better initialization strategies 
or reparameterization. For instance, optimizing in $\mathbb{S}_r^{d-1}$ (over polar coordinates) would expose the role of the radius $r$ as similar to the temperature parameter for the distributions we consider (the smaller the radius, the higher the "temperature"). Typically, the temperature is left constant during training or annealed, suggesting this might be advantageous also in our scenario. We leave the exploration of this strategy to future work.



\section{Additional experiments}\label{sec:add-expe}
We provide a detailed description of the experimental setup and report additional results.
The method is implemented in~\citep[PyTorch,][]{NEURIPS2019_9015}, and the code used for carrying out the experiments is included in the supplementary material.
All experiments were run on a machine with $16$ cores, $32Gb$ of RAM and an NVIDIA A100-SXM4-80GB
 GPU.

\noindent\textbf{\method's optimization and evaluation}
For our method, we optimize the data likelihood (under Gaussian equal variance error assumptions, as derived in \citet[Equation 2,][]{ngG020})
We optimize the bilevel Problem~(\ref{eq:bilevel}) when not specified otherwise.
We report results for the following three variants of \method, where the edge estimator of the graph is instantiated
\begin{itemize}[leftmargin=2cm]
    \item[(linear)] with L0 regularization, $f^{\gparams_j}_j\left(\mathbf{X}, \mathbf{A}_{j}\right) = \mathbf{X} \; (\gparams_j \circ \mathbf{A}_{j})$ and $w_j \in \mathbb{R}^{d}$; 
    \item[(non-linear)] with L0 regularization, $f^{\gparams_j}_j\left(\mathbf{X}, \mathbf{A}_{j}\right) = h_j\left( g^{\gparams_j}_j(\mathbf{X}, \mathbf{A}_{j})\right)$, with $h_j$ a locally connected MLP with one hidden layer, 50 hidden units and sigmoid activation function, $g^{\gparams_j}_j$ a linear layer with $d \times 50$ hidden units ($50$ per parent $i$) and masking out all non-parents of $j$ (according to $\mathbf{A}_{j}$), and $\gparams_{ji} = \| g^{\gparams_j}_{ji} \|_2$ as for \emph{NoTears} (non-linear);
    \item[(LARS)] with Least Angle Regression (LARS) \citep{efron2004least}, $f^{\gparams_j}_j\left(\mathbf{X}, \mathbf{A}_{j}\right) = \mathbf{X} \; (\gparams_j \circ \mathbf{A}_{j})$ and $\gparams_j \in \mathbb{R}^{d}$.
\end{itemize}
We additionally apply a $l_2$ regularization on $\{\params, \gparams\}$ to stabilize training, and we standardize all datasets to ensure that all variables have comparable scales.

With any variant, the outer problem is optimized for $5,000$ maximum iterations by gradient descent and early-stopped when approximate convergence is reached. 
When using the (linear) and (non-linear) back-ends, the graph of each permutation is optimized also by gradient descent, for $1,000$ epochs and also with an early-stopping mechanism based on approximate convergence.
After training, the graph for the mode permutation is further fine-tuned, and the final evaluation is carried out with this model.

We also experiment with the approximated, but faster, joint optimization of $\params$ and $\gparams$ in Problem~(\ref{eq:problem}) instead of the bilevel formulation, when we add the suffix \emph{joint} to the method name.  
In this case, as we need a differentiable graph estimator for jointly updating the ordering and the graph, we instantiate the method only with the (linear) and (non-linear) back-ends.
These models are optimized by gradient descent for $5,000$ epochs and with an early-stopping mechanism based on approximate convergence.

The default hyper-parameters of our methods were chosen as follows.
We set the sparse operators' temperature $\tau = 1$ and $K=100$, the strength of the $l_2$ regularizations to $0.0005$, and tuned the learning rates for the outer and inner optimization $\in [10^{-4}, 10^{-1}]$ and pruning strength $\lambda \in [10^{-6}, 10^{-1}]$.
The tuning was carried out by Bayesian Optimization using~\citep[Optuna,][]{akiba2019optuna} for $50$ trials on synthetic problems, consisting of data generated from different types of random graphs (Scale-Free, Erdős–Rényi, BiPartite) and of noise models (e.g., Gaussian, Gumbel, Uniform) with $20$ nodes and $20$ or $40$ expected edges.
For each setting, three datasets are generated by drawing a DAG, its edge weights uniformly in $[-2, -0.5] \cup [0.5, 2]$, and $1,000$ data points.
A set of hyper-parameters is then evaluated by averaging its performance on all the generated datasets, and the tuning is carried out to minimize SHD and SID jointly.
The default value of a hyper-parameter was then set to be the average value among those lying on the Pareto front and rounded up to have a single significant digit.

\noindent\textbf{Baseline optimization and evaluation}

All baseline methods are optimized using the codes released by their authors, apart from \emph{CAM} that is included in the~\citet[Causal Discovery Toolbox,][]{kalainathan2019causal}.
Before evaluation, we post-process the graphs found by \emph{NoTears} and \emph{Golem} by first removing all edges with absolute weights smaller than $0.3$ and then iteratively removing edges ordered by increasing weight until obtaining a DAG, as the learned graphs often contain cycles.
For the probabilistic baselines (\emph{VI-DP-DAG} and \emph{BCDNets}), we make use of the mode model (in particular, the mode permutation matrix) for evaluation.

We set the hyper-parameters of all methods to their default values, released together with the source code, apart from the parameters of the Least Angle Regressor module of \emph{sortnregress} that uses the Bayesian Information Criterion for model selection.

\noindent\textbf{Additional results on real-world tasks}
In Figures~\ref{fig:sachs},~\ref{fig:syntren}, and Tables~\ref{tab:sachs2} and~\ref{tab:syntren2} we extend the evaluation on real-world tasks provided in the main text.
More precisely, we report the results also for \method with sparseMAP, and provide additional metrics for comparison:
the \emph{F1} score using the existence of an edge as the positive class and the number of predicted edges to assess the density of the solutions.

\begin{figure}[t]
    \centering
    \includegraphics[width=0.9\textwidth]{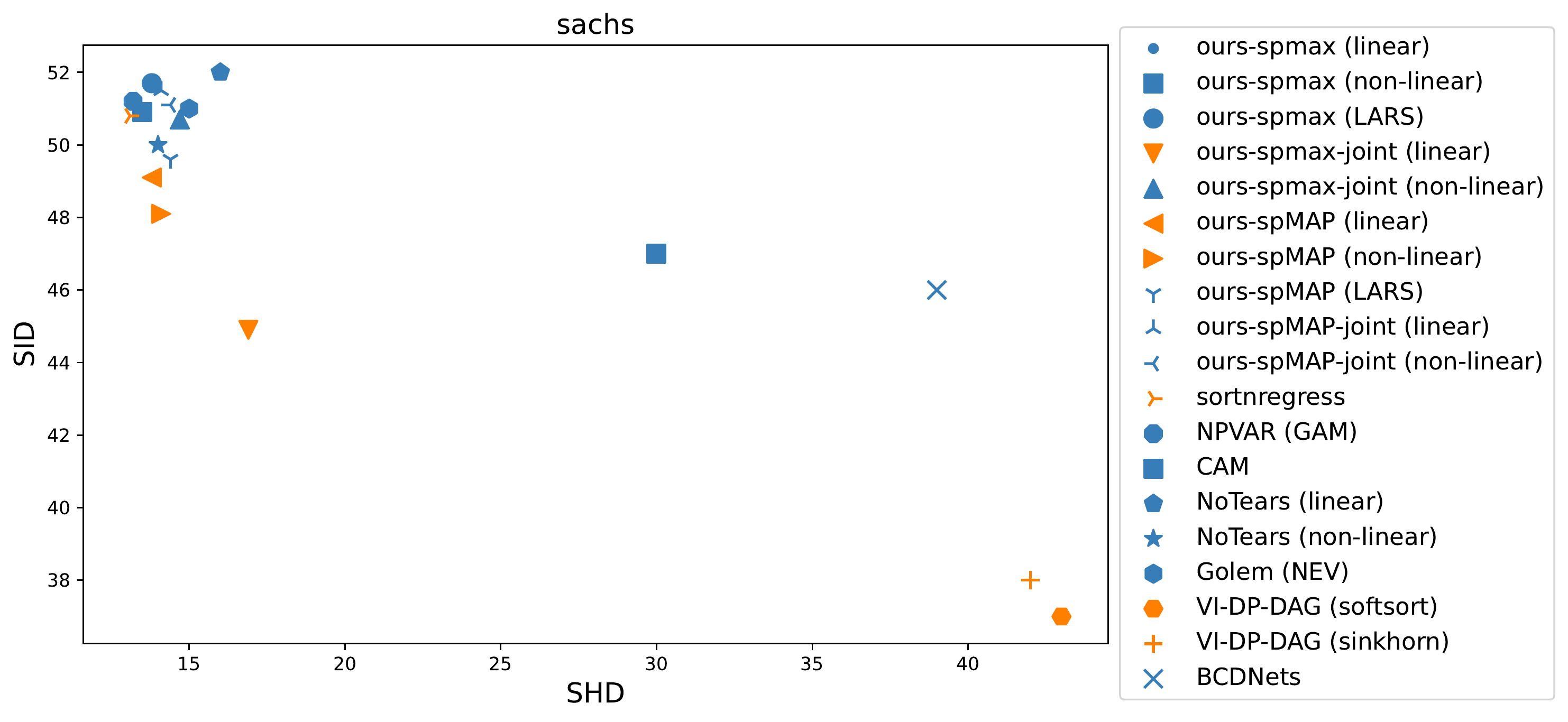}
    \caption{SHD vs SID on \emph{Sachs}. The solutions lying on the Pareto front are colored in orange and the others in blue.}
    \label{fig:sachs}
\end{figure}

\begin{figure}[h]
    \centering
    \includegraphics[width=0.9\textwidth]{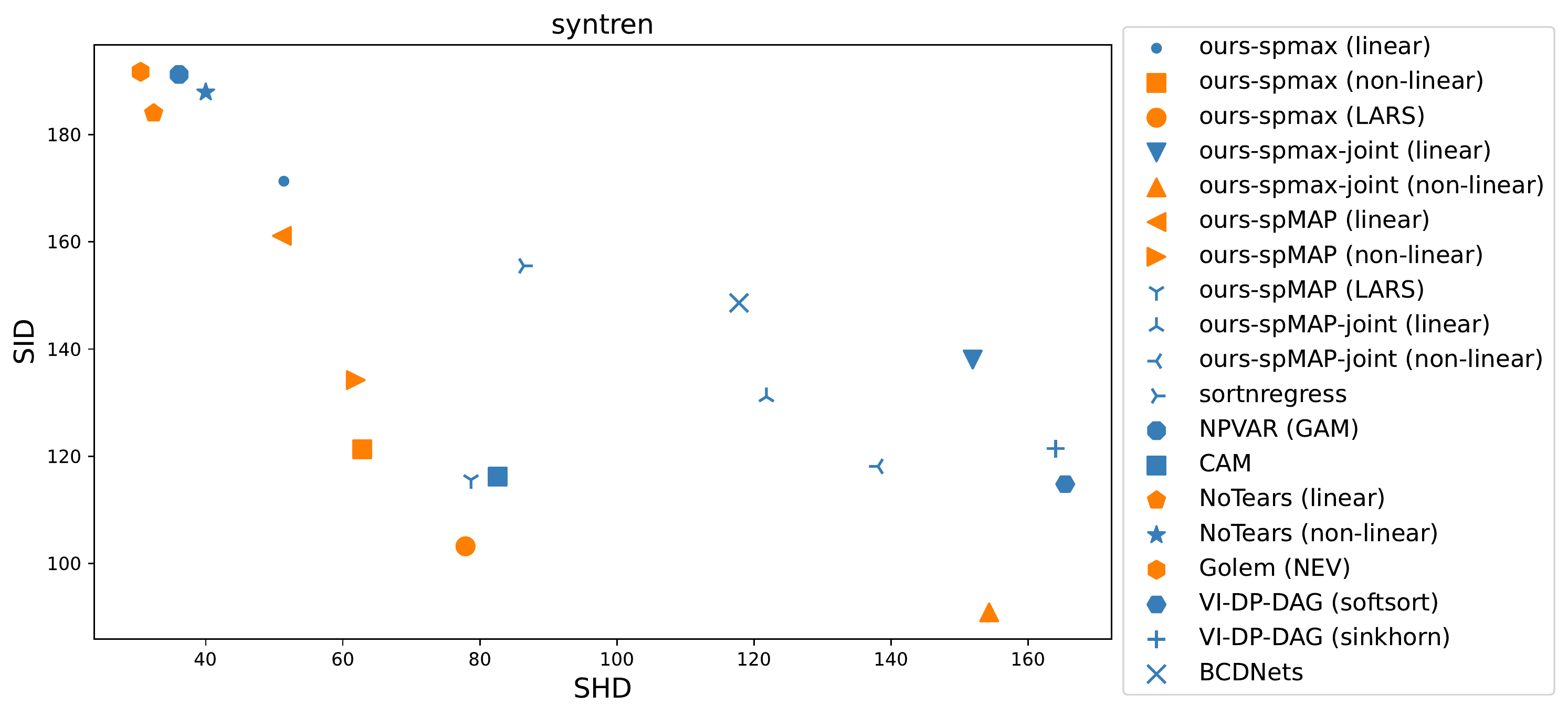}
    \caption{SHD vs SID on \emph{SynTReN}. The solutions lying on the Pareto front are colored in orange and the others in blue.}
    \label{fig:syntren}
\end{figure}

\noindent\textbf{Sparsemax vs sparseMAP comparison}
In Figures~\ref{fig:gauss} and~\ref{fig:sample}, we report an analysis of the effect of \method's hyper-parameter $K$ and of the sample size $n$ on the quality of the learned DAG.
Recall that $K$ corresponds to the maximal number of selected permutations for \textbf{sparsemax} and to the maximal number of iterations of the active set algorithm for \textbf{sparseMAP}, and that is the principal parameter that controls the computational cost of the ordering learning step in our framework.
This analysis is carried out on data generated by a linear SEM from a scale-free graph with equal variance Gaussian noise.
We choose this simple setting for two reasons: 
the true DAG can be identified from (enough) observational data only~\citep[Proposition 7.5,][]{peters2017elements}; 
provided with the true topological ordering, the LARS estimator can identify the true edges.
In this setting, we can then assess the quality of sparsemax and sparseMAP independently from the quality of the estimator.
For reference, in Figures~\ref{fig:gauss} and~\ref{fig:sample} we also report the performance of optimizing LARS with a random ordering or with a true one.

\begin{figure}[h]
    \centering
    \includegraphics[width=\textwidth]{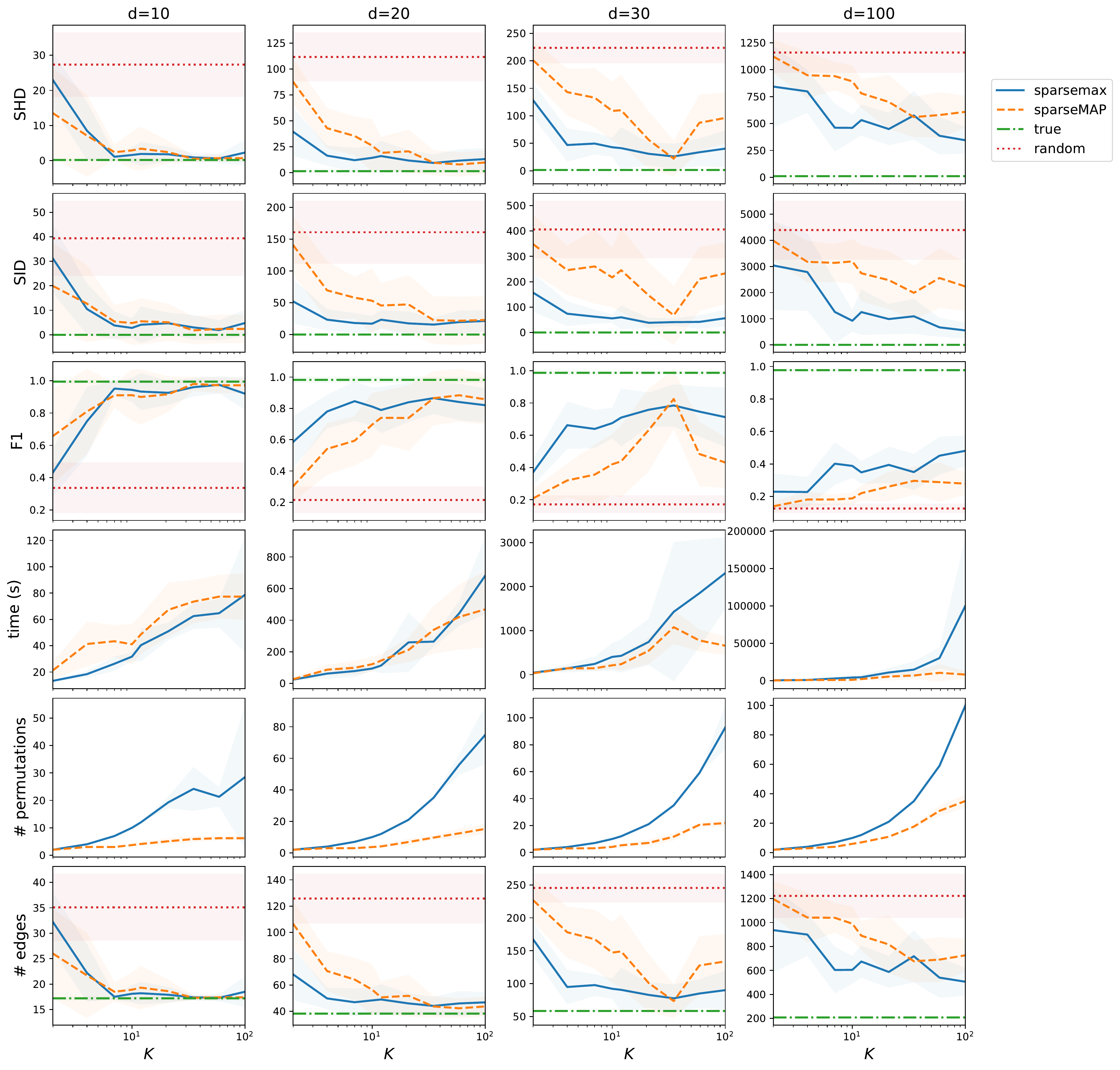}
    \caption{Comparison of different strategies for learning topological orderings, on data ($1,000$ samples) generated from a linear SEM with Scale-Free graph and Gaussian noise, and a varying number of nodes $d$. In order from top to bottom, we plot SHD, SID, F1, training time in seconds, number of permutations, and number of predicted edges (all at the end of training) as a function of the sparse operators' parameter $K$, which corresponds to the maximal number of sampled permutations for \textbf{sparsemax} and to the maximal number of iterations of the active set algorithm for \textbf{sparseMAP}.
    We also include two simple variants of \method, where the ordering is fixed to one true ordering (\textbf{true}) or to a random one (\textbf{random}).
    Results are averaged over $10$ seeds.}
    \label{fig:sample}
\end{figure}

\begin{figure}[h]
    \centering
    \includegraphics[width=\textwidth]{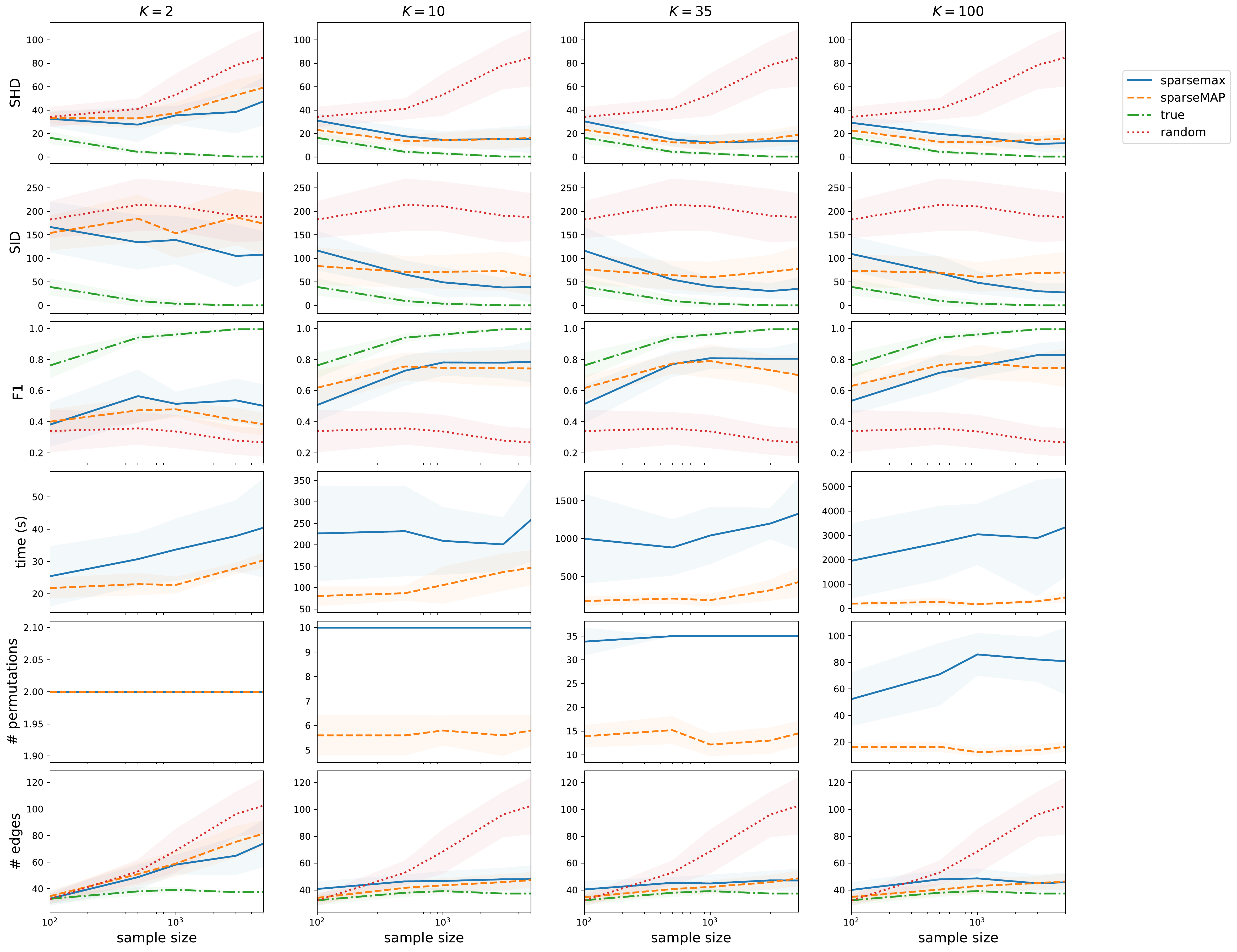}
    \caption{Comparison of different strategies for learning topological orderings, on samples of varying size ($n \in [100, 5'000]$ on the $x$-axes) generated from a linear SEM with Scale-Free graph and Gaussian noise, and number of nodes $d=20$. In order from top to bottom, we plot SHD, SID, F1, training time in seconds, number of permutations, and number of predicted edges (all at the end of training) for $4$ values of the sparse operators' parameter $K$, which corresponds to the maximal number of sampled permutations for \textbf{sparsemax} and to the maximal number of iterations of the active set algorithm for \textbf{sparseMAP}.
    We also include two simple variants of \method, where the ordering is fixed to one true ordering (\textbf{true}) or to a random one (\textbf{random}).
    Results are averaged over $10$ seeds.}
    \label{fig:gauss}
\end{figure}

We observe that sparsemax and sparseMAP provide MAP orderings that are significantly better than random ones for $K > 2$ and for any sample size.
For $K$ big enough, these orderings give DAGs that are almost as good as when knowing the variable ordering.
Furthermore, apart when $K=2$, increasing the sample size generally results in an improvement (although moderate) in the performance of sparsemax and sparseMAP.
However, the gap from true's performance does not reduces when increasing $n$ or $K$, which can be explained by the non-convexity of the search space and \method getting stuck in local minima.
We further observe that sparsemax generally provides better solutions than sparseMAP's at a comparable training time.
The only settings where this is not the case are for $d=30$ and $K > 35$.
Notice that sparseMAP's performance peaks at $K=35$ and degrades for higher $K$ in this setting.
This phenomenon can be due to the inclusion of unnecessary orderings to sparseMAP's set, which ends up hurting training.

We further study in Figures~\ref{fig:sachs-prun} and~\ref{fig:syntren-prun} the effect of the pruning strength, (controlled by $\lambda$) on the performance of both operators on the real datasets.
For this experiment, we instantiate \method with the linear estimator and train it by joint optimization.
As a general remark, when strongly penalizing dense graphs (higher $\lambda$) SHD generally improves and SID degrades.
On Sachs, the two operators do not provide significantly different results, while on SynTReN we find that sparsemax provides better SHD for comparable SID.

\begin{figure}[t]
    \centering
    \includegraphics[width=\textwidth]{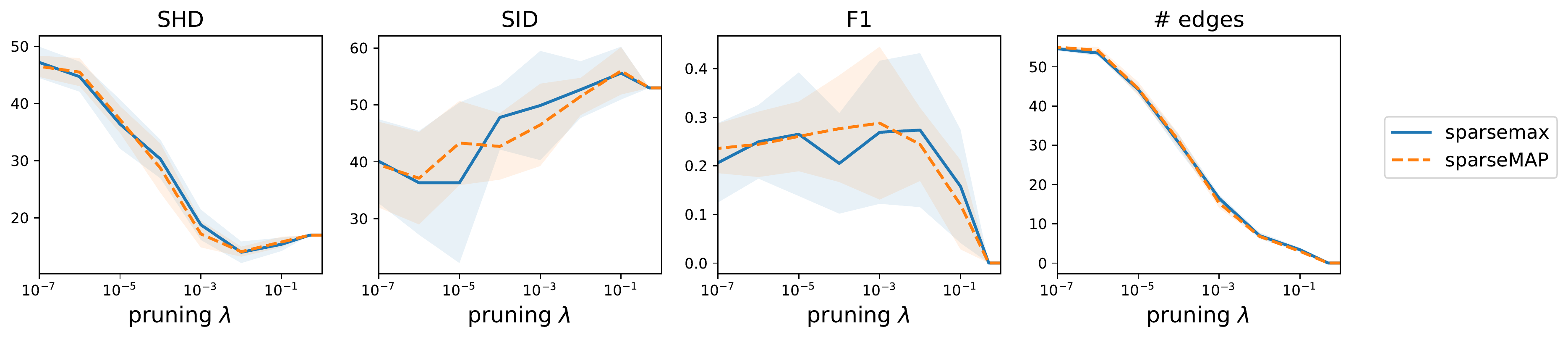}
    \caption{\emph{Sachs}. Effect of L0 pruning intensity (controlled by $\lambda$) on SHD, SID, F1 and number of learned edges for \method jointly optimizing a linear estimator and the topological ordering distribution either with sparsemax (\textbf{sparsemax}) or sparseMAP (\textbf{sparseMAP}).}
    \label{fig:sachs-prun}
\end{figure}

\begin{figure}[t]
    \centering
    \includegraphics[width=\textwidth]{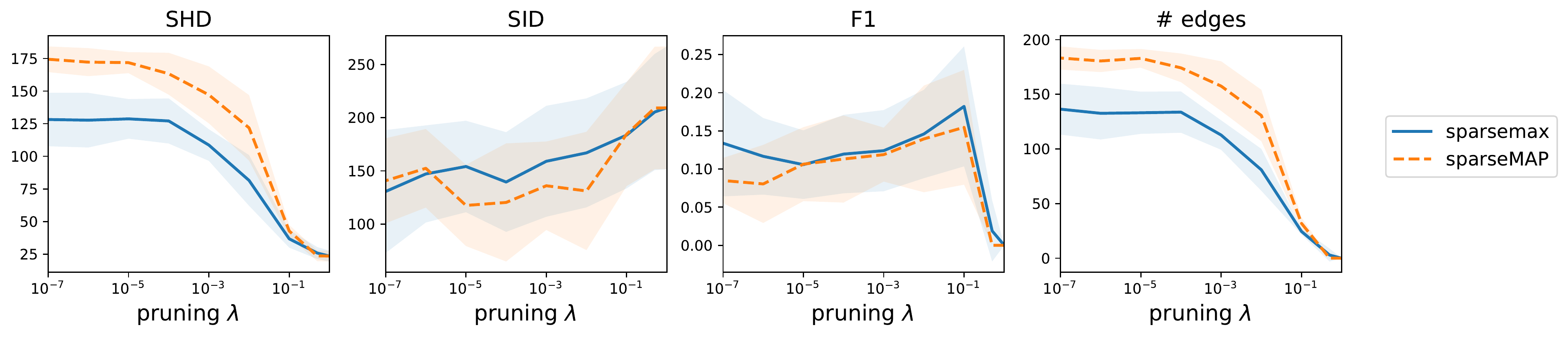}
    \caption{\emph{SynTReN}. Effect of L0 pruning intensity (controlled by $\lambda$) on SHD, SID, F1 and number of learned edges for \method jointly optimizing a linear estimator and the topological ordering distribution either with sparsemax (\textbf{sparsemax}) or sparseMAP (\textbf{sparseMAP}).}
    \label{fig:syntren-prun}
\end{figure}

\begin{table}
  \caption{\emph{Sachs}. We report Structural Hamming Distance (SHD, the lower the better), Structural Interventional Distance (SID, the lower the better), (F1, the higher the better), and the number of predicted edges for all methods.}
  \label{tab:sachs-results}
  \centering
  \begin{tabular}{lccccc}
    \toprule
 Method & SHD $\downarrow$ & SID $\downarrow$ & F1 $\uparrow$ & \# edges \\
 \midrule
NoTears (linear) & ${16.0}$  & ${52.0}$ & ${0.095}$  & $4$\\
NoTears (non-linear) & $14.0$  & ${50.0}$ & $0.320$  & $8$\\
Golem (NEV) & ${15.0}$  & ${51.0}$ & ${0.190}$  & $4$\\
\midrule
VI-DP-DAG (softsort) & ${43.0}$  & $37.0$  & ${0.265}$  & $51$\\
VI-DP-DAG (sinkhorn) & ${42.0}$  & $38.0$  & ${0.269}$  & $50$\\
BCDNets & ${39.0}$  & ${46.0}$  & ${0.196}$  & $34$\\
sortnregress  & ${13.1}$ & $50.8$ & $0.377$ & $9.0$\\
CAM  & $30.0$ & $47.0$ & $0.28$ & $33.0$\\
NPVAR (GAM)  & $13.2$ & $51.2$ & $0.344$ & $9.6$\\
\midrule
\method{}-spmax (linear)  & $13.8$ & $51.6$ & $0.323$ & $7.7$\\
\method{}-spMAP (linear)  & $13.8$ & $49.1$ & $0.309$ & $7.6$\\
\method{}-spmax (LARS)  & $13.8$ & $51.7$ & $0.316$ & $9.5$\\
\method{}-spMAP (LARS)  & $14.4$ & $49.6$ & $0.275$ & $9.2$\\
\method{}-spmax (non-linear)  & $13.5$ & $50.9$ & $0.348$ & $7.1$\\
\method{}-spMAP (non-linear)  & $14.1$ & $48.1$ & $0.32$ & $7.4$\\
\method{}-spmax-joint (linear)  & $16.9$ & $44.9$ & $0.36$ & $16.1$\\
\method{}-spMAP-joint (linear)  & $14.1$ & $51.5$ & $0.244$ & $6.8$\\
\method{}-spmax-joint (non-linear)  & $14.7$ & $50.7$ & $0.265$ & $8.6$\\
\method{}-spMAP-joint (non-linear)  & $14.4$ & $51.1$ & $0.236$ & $5.0$\\
\bottomrule
  \end{tabular}\label{tab:sachs2}
\end{table}

\begin{table}
  \caption{\emph{SynTReN}. We report Structural Hamming Distance (SHD, the lower the better), Structural Interventional Distance (SID, the lower the better), Topological Ordering Pearson Correlation (TOPC, the higher the better), (F1, the higher the better) number of predicted edges all averaged over the $10$ networks.}
  \label{tab:syntren-results}
  \centering
  \begin{tabular}{lccccc}
    \toprule
 Method & SHD $\downarrow$ & SID $\downarrow$ & F1 $\uparrow$ & \# edges \\
 \midrule
NoTears (linear) & ${32.4}$ & ${184.0}$  & ${0.157}$  & $17.7$\\
NoTears (non-linear) & ${40.0}$  & ${187.9}$  & ${0.165}$  & $28.1$\\
Golem (NEV) & $30.5$  & ${191.7}$  & ${0.152}$  & $15.4$\\
\midrule
VI-DP-DAG (softsort) & ${165.4}$  & $114.8$  & ${0.109}$  & $175.7$\\
VI-DP-DAG (sinkhorn) & ${164.0}$  & $121.4$  & ${0.104}$  & $173.6$\\
BCDNets & ${117.8}$  & ${148.6}$  & ${0.113}$  & $119.0$\\
sortnregress  & $86.4$ & $156.0$ & $0.151$ & $89.8$\\
CAM  & $82.6$ & $116.0$ & $0.192$ & $86.6$\\
NPVAR (GAM)  & $36.1$ & $191.0$ & $0.184$ & $26.0$\\
\midrule
\method{}-spmax (linear)  & $51.4$ & $171.0$ & $0.182$ & $44.5$\\
\method{}-spMAP (linear)  & $51.1$ & $161.0$ & $0.211$ & $47.2$\\
\method{}-spmax (LARS)  & $77.9$ & $103.0$ & $0.238$ & $86.0$\\
\method{}-spMAP (LARS)  & $78.7$ & $116.0$ & $0.212$ & $83.8$\\
\method{}-spmax (non-linear)  & $62.8$ & $121.0$ & $0.245$ & $65.1$\\
\method{}-spMAP (non-linear)  & $61.9$ & $134.0$ & $0.255$ & $65.5$\\
\method{}-spmax-joint (linear)  & $152.0$ & $138.0$ & $0.104$ & $161.0$\\
\method{}-spMAP-joint (linear)  & $122.0$ & $131.0$ & $0.139$ & $131.0$\\
\method{}-spmax-joint (non-linear)  & $154.0$ & $90.9$ & $0.149$ & $169.0$\\
\method{}-spMAP-joint (non-linear)  & $138.0$ & $118.0$ & $0.143$ & $150.0$\\



    \bottomrule
  \end{tabular}\label{tab:syntren2}
\end{table}

\noindent\textbf{Additional results on synthetic data}\label{app:time}
We report in Figures~\ref{fig:synthetic-pareto} and\ref{fig:synthetic-all} an additional comparison of \method with several state-of-the-art baselines on synthetic problems of varying number of nodes $d$ and $n=1,000$ samples generated from scale-free DAGs with $2d$ expected number of edges and different noise models: (\emph{Gaussian}) linear SEM with equal variance Gaussian noise; (\emph{Gumbel}) linear SEM with equal variance Gumbel noise; (\emph{MLP}) 2-layer neural network SEM with sigmoid activations and equal variance Gaussian noise.
To limit the varsortability of the generated problems, the parameters of the SEMs are uniformly drawn from $[-0.5, -0.1] \cup [0.1, 0.5]$.
The resulting problems are still varsortable on average, as demostrated by the great performance of sortnregress, and by the fact that by initializing \method's parameters $\params$ with the marginal variances of the nodes consistently improves its performance, compared to initializing them with the vector of all zeros.
For these experiments we set $K=10$, use the linear edge estimator and jointly optimize all \method's parameters.

Compared to other differentiable order-based methods, \method consistently provides a significantly better trade-off between SHD and SID, confirming our findings on the real-world data.
Indeed, these baselines generally discover DAGs with high false positive rates.
\method equipped with the sparseMAP operator also improves upon the linear continuous methods based on the exponential matrix regularization, but when equipped with the top-$k$ sparsemax operator its results on these settings depend on a good initialization of $\params$ (the marginal variances in this case) and worsen with the number of nodes.
A higher value of $k$ would be required to improve \method-sparsemax's performance in these settings, as shown in Figure \ref{fig:gauss}.

In terms of running times, \method is aligned with NoTears and is generally faster than CAM, Golem and VI-DP-DAG.
Of course \method's running times strongly depend on the value of $K$, the choice of edge estimator and the optimization of either the joint or bi-level problems.

\begin{figure}
    \centering
    \includegraphics[width=\textwidth]{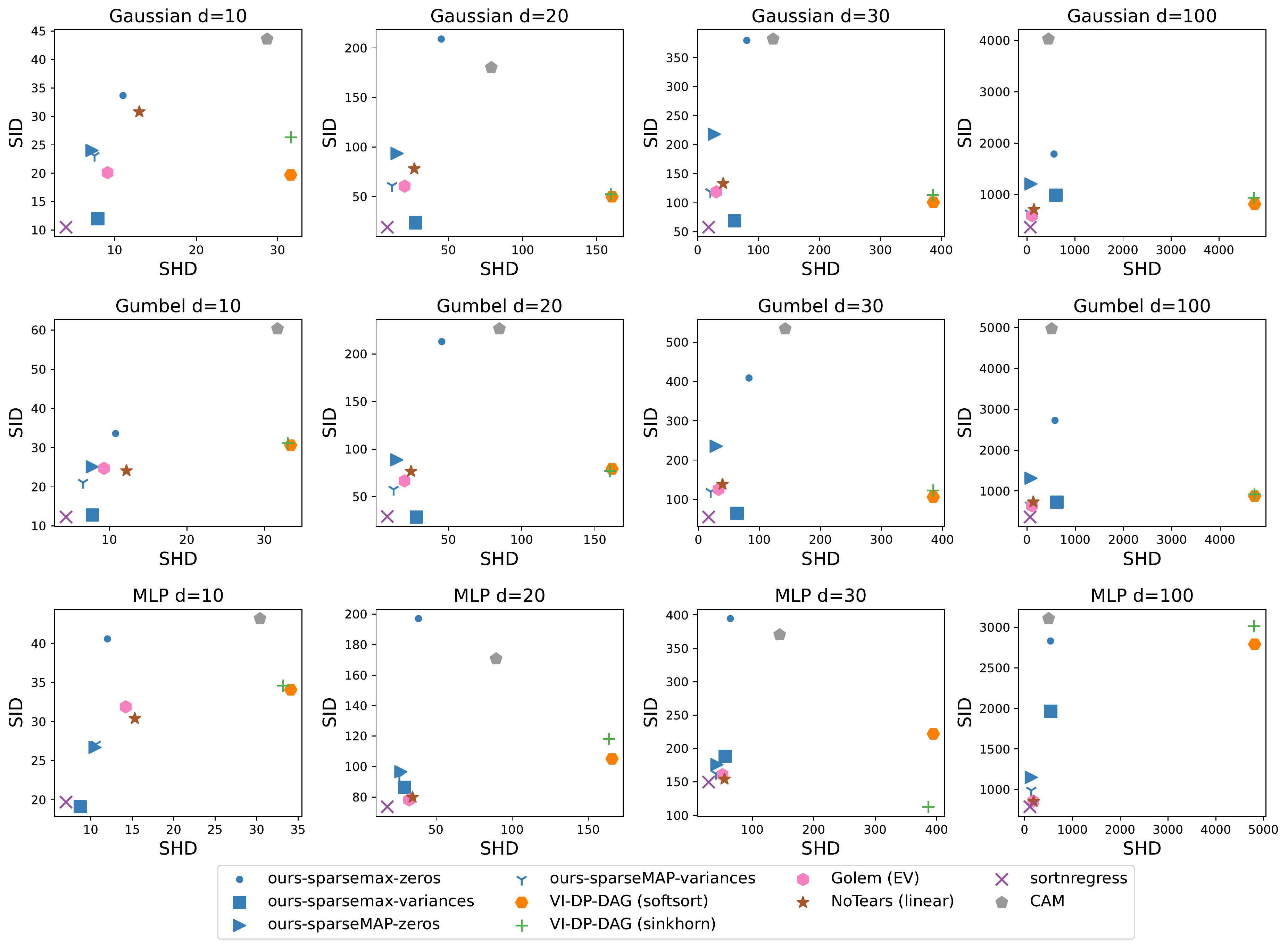}
    \caption{SHD vs SID on synthetic datasets generated from scale-free DAGs with Gaussian (top), Gumbel (middle) and MLP (bottom) SEMs. \method's (ours) $\params$ is either initialized with a zero vector (zeros) or with the marginal variances (variances).}
    \label{fig:synthetic-pareto}
\end{figure}

\begin{figure}
    \centering
    \includegraphics[width=\textwidth]{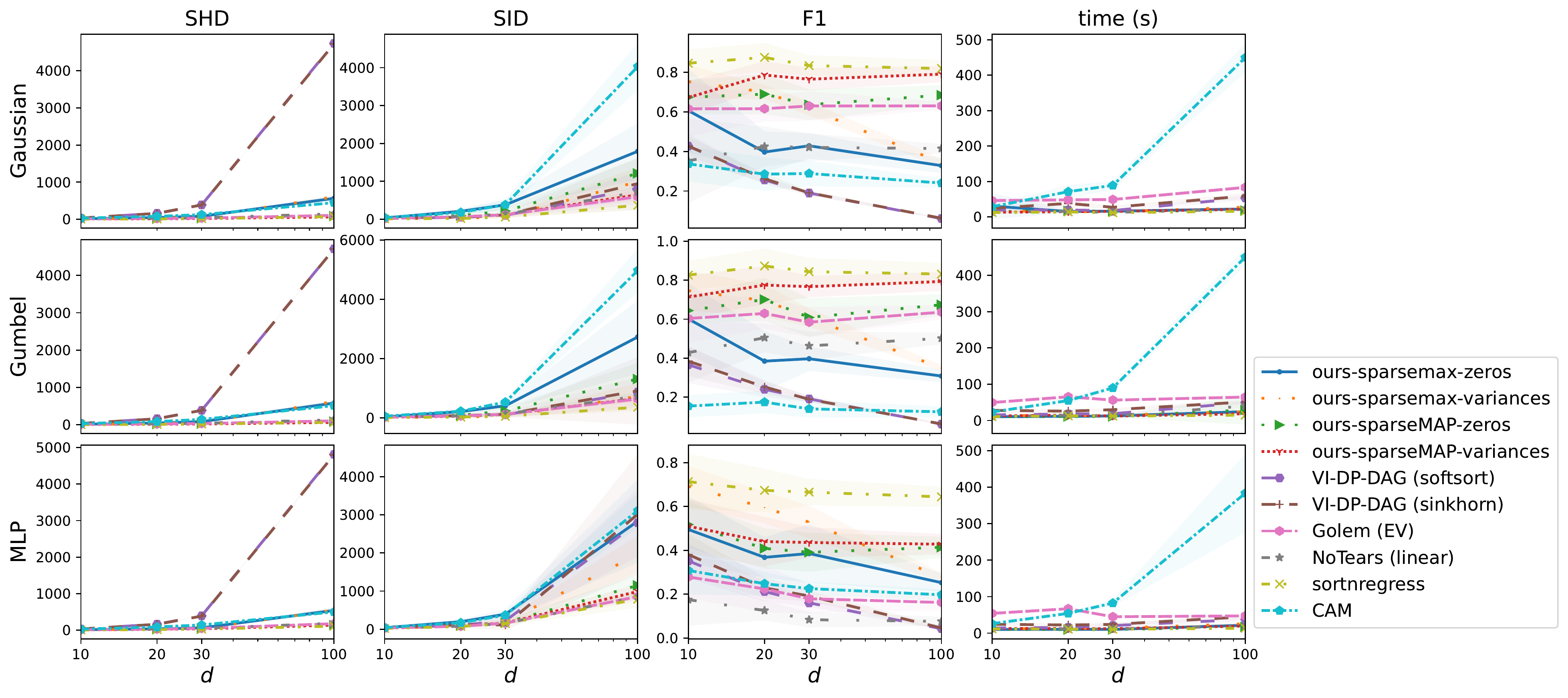}\\
    \includegraphics[width=\textwidth]{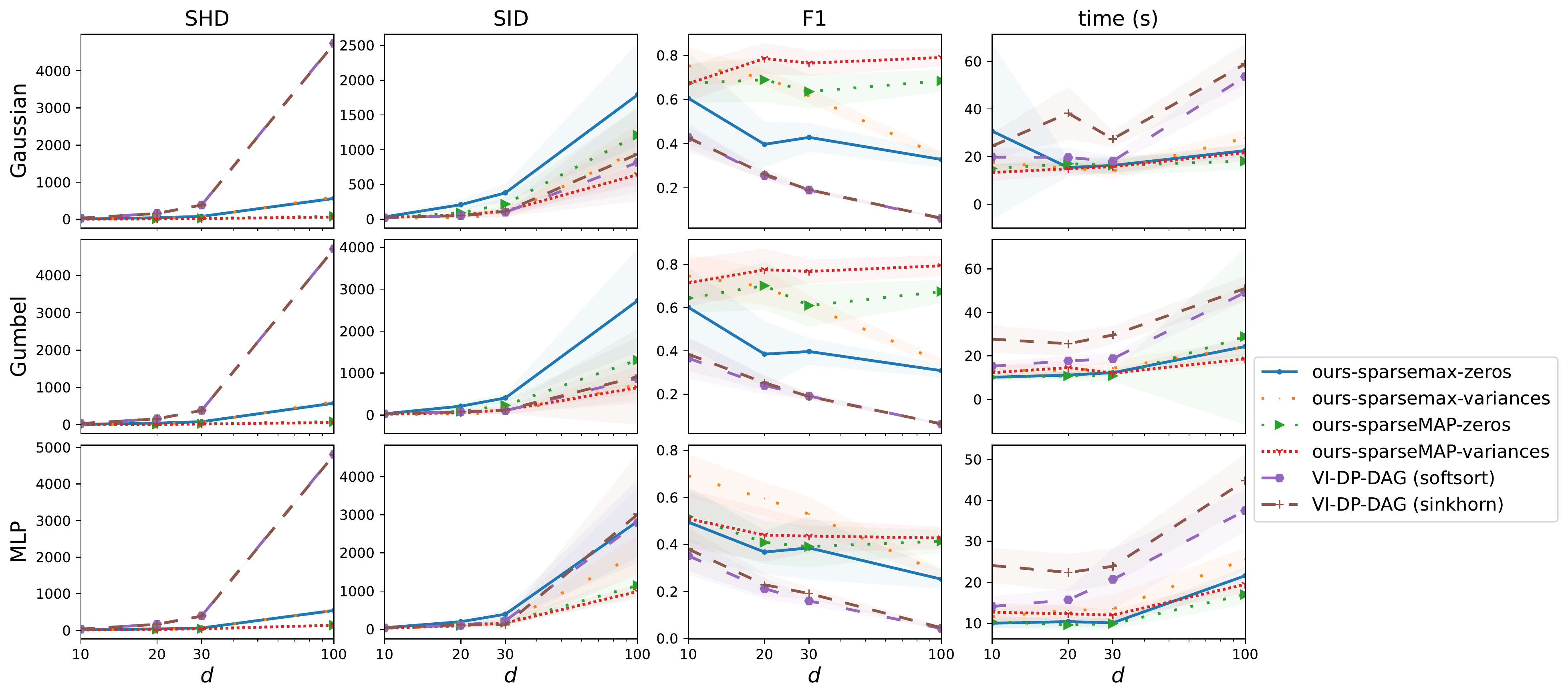}\\
    \includegraphics[width=\textwidth]{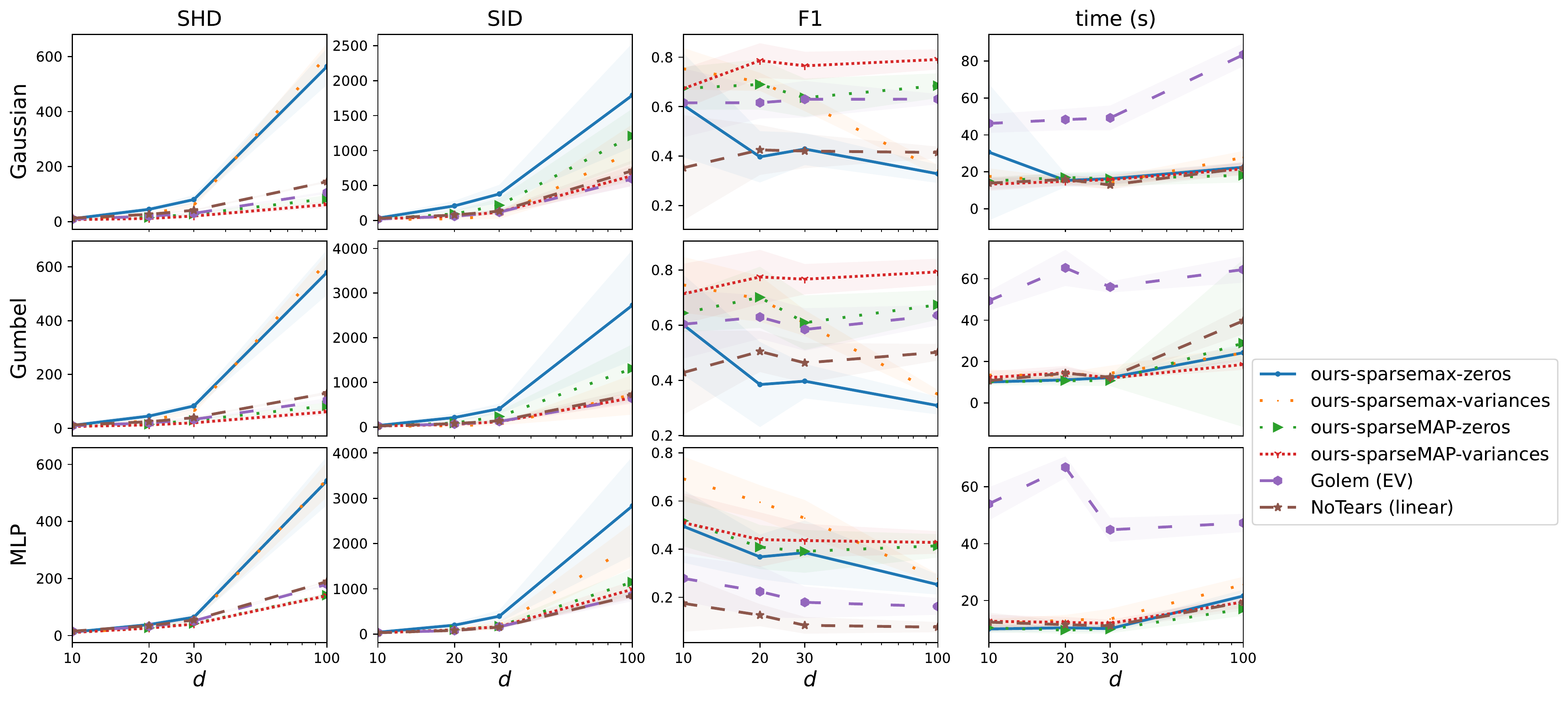}
    \caption{Comparison of \method (ours) with related methods in terms of SHD, SID, F1, training time (in seconds) on synthetic datasets generated from scale-free DAGs with Gaussian, Gumbel and MLP SEMs. \method's (ours) $\theta$ is either initialized with a zero vector (zeros) or with the marginal variances (variances). For ease of reading, we split the full comparison (top) into two, to focus on the comparison with differentiable order-based methods (middle) and with differentiable methods based on the matrix exponential constraint (bottom).}
    \label{fig:synthetic-all}
\end{figure}


\end{document}